\pgfplotsset{compat=newest}
\newcommand{\herm}{\mathsf{H}}
\definecolor{Gray}{gray}{0.75}
\begin{document}

\title{Graph-Time Convolutional Neural Networks: \\ Architecture and Theoretical Analysis}

\author{Mohammad Sabbaqi and Elvin Isufi
\IEEEcompsocitemizethanks{\IEEEcompsocthanksitem The authors are with the Intelligent Systems Department, Delft University of Technology, Delft, The Netherlands. 
\protect\\
E-mails: \{m.sabbaqi, e.isufi-1\}@tudelft.nl;
\protect\\
Part of this work has been presented in~\cite{isufi2021graphproduct}.
}
\thanks{This work is supported by the TU Delft AI Labs programme.}}

%
%


\IEEEtitleabstractindextext{%
\begin{abstract}
\justifying{
Devising and analysing learning models for spatiotemporal network data is of importance for tasks including forecasting, anomaly detection, and multi-agent coordination, among others.
Graph Convolutional Neural Networks (GCNNs) are an established approach to learn from time-invariant network data.
The graph convolution operation offers a principled approach to aggregate multi-resolution information in each layer and offers some degree of mathematical analysis by exploring tools from graph signal processing.
This analysis provides insights on the equivariance properties of GCNNs; spectral behaviour of the learned filters; and the stability to perturbations in the graph topology, which arises because of support perturbations or uncertainties.
However, extending the convolution-principled learning and respective analysis to the spatiotemporal domain is challenging because spatiotemporal data have more intrinsic dependencies.
Hence, a higher flexibility to capture \emph{jointly} the spatial and the temporal dependencies is required to learn meaningful higher-order representations.
Here, we leverage product graphs to represent the spatiotemporal dependencies in the data and introduce Graph-Time Convolutional Neural Networks (GTCNNs) as a principled architecture to aid learning.
The proposed approach can work with any type of product graph and we also introduce a parametric product graph to learn also the spatiotemporal coupling.
The convolution principle further allows a similar mathematical tractability as for GCNNs.
In particular, the stability result shows GTCNNs are stable to spatial perturbations but there is an implicit trade-off between discriminability and robustness; i.e., the more complex the model, the less stable.
Extensive numerical results on benchmark datasets corroborate our findings and show the GTCNN compares favourably with state-of-the-art solutions.
We anticipate the GTCNN to be a starting point for more sophisticated models that achieve good performance but are also fundamentally grounded.}
\end{abstract}
%
\begin{IEEEkeywords}
	Graph convolutional neural networks, graph-time neural networks, graph signal processing, stability to perturbations.
\end{IEEEkeywords}}

\maketitle

\IEEEdisplaynontitleabstractindextext

\IEEEpeerreviewmaketitle

\IEEEraisesectionheading{\section{Introduction}}
\label{sec_intro}


Learning from \emph{multivariate temporal} data is a challenging task due to their intrinsic spatiotemporal dependencies.
This problem arises in applications such as time series forecasting, classification, action recognition, and anomaly detection~\cite{mo:yu2017spatio,mo:yan2018STGCNN,mo:kadous2002temporal,mo:zhang2019anomaly}.
The spatial dependencies can be captured by a graph either explicitly such as in transportation networks or implicitly such as in recommender systems~\cite{wang2021graph}.
Therefore, graph-based inductive biases should be considered during learning to exploit the spatial dependencies alongside with temporal patterns in a computationally and data efficient manner.
Based on advances in processing and learning over graphs~\cite{GSPsurvey,hamilton2017representation}, a handful of approaches have been proposed to learn from multivariate temporal data~\cite{surveyDLonST}.
The main challenge is to capture the spatiotemporal dependencies by built-in effective biases in a principled manner~\cite{battaglia2018inductivebias}.

The convolution principle has been key to build learning solutions for graph-based data \cite{gama2020elvinmagazine,bronstein2021geometric}. By cascading graph convolutional filters and pointwise nonlinearities, graph convolutional neural networks (GCNNs) have been developed as non-linear models for graph-based data~\cite{GamaGCNN,CNNonGraphs}. Such a principle reduces both the number of learnable parameters and the computational complexity in the GCNN, ultimately, overcoming the curse of dimensionality~\cite{battaglia2018inductivebias,bronstein2021geometric}. The convolution operation allows also for a mathematical tractability of GCNNs. On the one hand, it is a permutation equivariant operation, implying that it can capture structural symmetries to aid learning~\cite{Ruiz2021GNN,bronstein2021geometric}. On the other hand, it enjoys a spectral duality by means of a graph signal processing analysis \cite{GSPsurvey}, which, in turn, can be used to characterize the stability of GCNNs to perturbations in the support \cite{gamamain,gao2021stability,Ruiz2021GNN,levie2021transferability}. The overreaching goal of this paper is to leverage the convolution principle to learn from spatiotemporal data as well as study the theoretical properties of this model.


\subsection{Related Works}\label{subsec:relW}

There are two stream of works related to this paper: one developing architectures; and one studying their stability.

\smallskip
\noindent\textbf{Architectures.} Spatiotemporal graph-based learning models can be divided into hybrid and fused models.
\emph{Hybrid} models combine distinct learning algorithms for graph and temporal data.
They use GCNNs to extract higher-level spatial features and process these features by a temporal RNN, CNN, or variants of them.
The works in~\cite{mo:chai2018bike,mo:manessi2020dynamic,mo:sun2020constructing} concatenate a GCNN by an LSTM where the former is applied per timestamp and the latter per node.
Instead, the work in~\cite{mo:khodayar2018spatio} uses an RNN followed by a GCNN.
Another work in~\cite{mo:yu2017spatio} uses a narrower graph convolutional layer in between two temporal gated convolution layers, whereas~\cite{wang2021behavioral} combines graph diffusion with temporal convolutions. 
Another approach is to ignore the graph structure and use temporal CNNs augmented by the attention mechanism to reduce the number of parameters and ease training~\cite{mo:guo2019attention}.
Graph WaveNet uses a gated dilated temporal convolution followed by a first order graph convolution~\cite{mo:wu2019graph}.
%
%
%
%
\emph{Fused} models impose the graph structure into conventional spatiotemporal models to jointly capture the spatiotemporal relationships~\cite{mo:isufi2019VARMA,mo:seo2018GRNN,mo:ruiz2020gatedGRNN,mo:si2019AEGCLSTM,mo:yan2018STGCNN}.
The fully connected blocks are replaced by graph convolutions to generate graph-based latent embeddings.
The work in~\cite{mo:isufi2019VARMA} proposed graph-based VARMA to learn spatiotemporal embeddings.
The authors in~\cite{mo:seo2018GRNN,mo:si2019AEGCLSTM} employ variants of RNN models with graph convolution-based blocks, while \cite{mo:ruiz2020gatedGRNN} also designs a graph-based gating module.
In another approach, the spatial graph is passed through a continuous temporal shift operator to define a spatiotemporal convolution~\cite{hadou2021space}. 
The work in~\cite{mo:yan2018STGCNN} starts with a graph partitioning and defines a spatiotemporal neighborhood in each partition to perform message passing.
The works in~\cite{pareja2020evolvegcn,ehsan2019variational} use diffusion schemes where the output is a function of the signals of at most one hop neighbors. Since diffusion is considered in a causal fashion the information from the k-hop spatiotemporal neighbors is considered in subsequent layers, which may be limiting to extract patterns.


The advantage of hybrid models lies within their simple and efficient implementation since they benefit from modular spatial and temporal blocks.
However, this modularity makes it unclear how to best combine them for the task at hand and analyze theoretically their inner-working mechanisms.
Moreover, they are disjoint and sequential in nature (first graph and then temporal processing or vice-versa); thus, are more restrictive and require each node to accumulate global information.
Instead, fused models overcome these issues but they mostly use a low-order spatiotemporal aggregation which limits their effectiveness in capturing spatiotemporal patterns.
We address this challenge by leveraging product graphs to represent multivariate time series~\cite{sandryhaila2014big}.
Product graphs have been widely used for modeling complex data~\cite{leskovec2010kronecker}, the matrices~\cite{huang2018rating}, and developing a graph-time signal processing framework \cite{grassi2017time}.
However, their use for learning from spatiotemporal data has been limited.
The work in~\cite{pan2020spatio} builds spatiotemporal scattering transforms using product graphs and shows their advantages over alternative data representative solutions.
Here, we use product graphs as a platform to run spatiotemporal convolutions and build neural network solutions with it.
The work in \cite{mo:yan2018STGCNN} implicitly uses the Cartesian product graph to build a spatiotemporal neighborhood.
Differently, we develop a principled convolutional framework for learning via any product graph, multi-resolution neighborhood information aggregation in each layer (~\cite{mo:yan2018STGCNN} focuses only on immediate neighbors), and introduce a parametric product graph to learn the spatiotemporal coupling.


\smallskip
\noindent\textbf{Stability.}
Studying the stability of graph neural networks to graph perturbation reveals their potentials and limitations w.r.t. the underlying support.
The work in~\cite{gamamain} provided a stability scheme to relative perturbations on the graph for GCNNs by enforcing graph convolutional filters to vary smoothly over high graph frequencies.
Authors in~\cite{levie2019transferability} proposed a linear stability bound to graph arbitrary perturbation for a large class of graph filters named Cayley smooth space.
The work in~\cite{thanou2021stability} considers an edge rewiring model for perturbation and provides an upper bound for output changes.
The authors in~\cite{gao2021stability} proved that GCNNs are stable to stochastic perturbation model where links have a probability of existence in the graph.
In case of spatiotemporal graph neural networks, the work in~\cite{mo:ruiz2020gatedGRNN} developed a stability analysis to graph perturbation following the model in~\cite{gamamain}.
The authors in~\cite{hadou2021space} have used the relative perturbation model in~\cite{gamamain} to investigate their model stability properties. 
We also pursue a similar analysis as~\cite{gamamain} for the graph-time convolutional neural network to approve its capabilities and observe the effects of time component in the stability and transferability  properties of them.
The similar base of stability analysis allows us to compare the proposed model with previous works in~\cite{mo:ruiz2020gatedGRNN,hadou2021space} and collect insights into how different learning solutions for spatiotemporal data handle uncertainty in the spatial support.
Differently from the latter, the proposed stability results via product graphs provides direct links with that of GCNNs if time is invariant, highlighting also the impact of how the temporal component is accounted for a joint spatiotemporal learning.

\subsection{Contribution}\label{subsec:contrib}

We consider the convolutional principle over product graphs as an inductive bias to build graph-time convolutional neural network (GTCNN) for learning from multivariate temporal data. Our specific contributions are:

\begin{enumerate}[label=C\arabic*)]
	\item \emph{Principled architecture:} We develop an architecture that leverages graph-time convolutions and product graphs as a spatiotemporal inductive bias to reduce the computational cost and the number of trainable parameters.
	We also propose a solution to learn the spatiotemporal coupling and a recursive implementation to tackle the high dimensionality of the product graphs.
	\item \emph{Theoretical properties:}
	We prove GTCNNs are permutation equivariant. Then, leveraging concepts from graph-time signal processing, we show the learned filters act in the joint graph and temporal spectral domain as point-wise multiplication between the learnt frequency response and the graph-time Fourier transform of the input.
	%
	\item \emph{Stability:}
	We prove the GTCNN is stable to perturbations in the spatial graph. Our result shows the GTCNN becomes less stable for larger graphs and that the temporal window in the product graph affects it more severely than increasing the number of nodes. We provide a thorough discussion of our stability bound w.r.t. baselines and alternative approaches.
	\item  \emph{Numerical performance:} We compare the GTCNN with baseline and state-of-the-art solutions in different tasks about time series classification and forecasting. The GTCNN outperforms the baseline GCNN model that ignores the temporal dimension and compares well with alternatives on benchmark datasets.
\end{enumerate}

The structure of this paper is as follows.
In Section.~\ref{sec_back}, we explain product graphs and signals over them to formulate our problem.
In Section.~\ref{sec_gtcnn}, the GTCNN is proposed and its properties are discussed.
Section.~\ref{sec_stability} is dedicated to stability analysis of GTCNNs.
Section.~\ref{sec_numeric} contains the numerical results and experiments to corroborate the strengths of GTCNNs.
Finally, Section~\ref{sec_conc} concludes the paper.


\section{Problem Formulation}
\label{sec_back}
Here, we first introduce some background material about product graphs and their use to represent multivariate time-series. Then, we motivate the problem of learning from multivariate time-series via an inductive bias perspective.

\subsection{Product Graphs for Spatiotemporal Coupling}\label{subsec_PG}
%

Consider an $N \times 1$ multivariate signal $\bbx_t$ collected over $T$ time instances in matrix $\bbX = [\bbx_1, \ldots, \bbx_T ]$. The $t$th column $\bbx_t = [x_t(1), \ldots, x_t(N)]^\top$ may be measurements of different sensors in a sensor network at time $t$, whereas the $i$th row $\bbx^i = [x_1(i), \ldots, x_T(i)]^\top$ may be the time series of sensor $i$ over the $T$ time instants. When learning representations from the data in $\bbX$ we are interested in exploiting both the spatial dependencies in $\bbx_t$ and the temporal dependencies in $\bbx^i$. However, despite learning from these spatial and temporal relations is important to extract patterns in $\bbX$, we need to devise methods that exploit them \emph{jointly} as an inductive bias for learning representations~\cite{battaglia2018inductivebias}.

\begin{figure}[t]
	\begin{center}
		\includegraphics[width=0.7\linewidth]{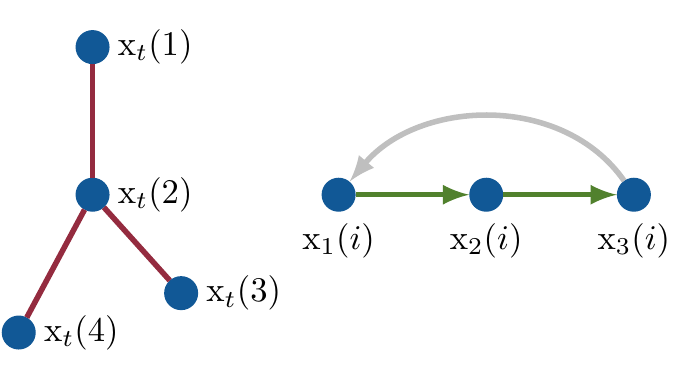}
	\end{center}
	\caption{Spatial and temporal graphs along with signals over their nodes. (Left) Spatial graph and graph signal at time $t$. Scalar $x_t(2)$ is the signal and node two recorded at time $t$. It is proximal with signals at nodes one, three, and four. (Right) Temporal graph and $i$th time series illustrated as graph signal. Edges in green are those of a directed line graph, while edges green and grey are of a cyclic graph.}
	\label{fig: signal over graphs}	
\end{figure}

When data $\bbx_t$ have a (hidden) underlying structure, we can capture their \textit{spatial} relations through a \textit{spatial graph} $\ccalG = (\ccalV, \ccalE)$ of $N$ nodes in set $\ccalV = \{1, \ldots , N\}$ and $|\ccalE|$ edges in set $\ccalE \subseteq \ccalV \times \ccalV$. We represent the structure of $\ccalG$ via its graph shift operator (GSO) matrix $\bbS \in  \reals^{N \times N}$ which is a sparse matrix with non-zero entries $[\bbS]_{ij} \neq 0$ only if $(i,j) \in \ccalE$ or $i = j$~\cite{ortega2018overview}. We can view $\bbx_t$ as a graph signal with scalar $x_t(i)$ residing on node $i$. Likewise, we can capture the \textit{temporal} relations in $\bbx^i$ through a \textit{temporal graph} $\ccalG_T = (\ccalV_T , \ccalE_T )$ of $T$ nodes in set $\ccalV_T = \{1, \ldots, T\}$
and $|\ccalE_T |$ edges in set $\ccalE_T \in \ccalV_T \times \ccalV_T$. Each node represents one time instant $t$ and set $\ccalE_T$ contains an edge if signals at instances $t$ and $t^\prime$ are related. Also, we represent the structure of $\ccalG_T$ with its GSO matrix $\bbS_T \in \reals^{T \times T}$. Data $\bbx^i$ can be seen as a graph signal with scalar $x_t(i)$ being the value at node $t$. Examples of $\ccalG_T$ are the directed line graph that assumes signal $x_t(i)$ depends only on the former instance $x_{t-1}(i)$, the cyclic graph that accounts for periodicity, or any other graph that encodes the temporal dependencies in $\bbx^i$, see Fig.~\ref{fig: signal over graphs}.

Given graphs $\ccalG$, $\ccalG_T$, we can capture the spatiotemporal relations in the data $\bbX$ through product graphs
\begin{equation}
\mathcal{G}_{\diamond}=   \mathcal{G}_{T}   \diamond \mathcal{G} =\left(\mathcal{V}_{\diamond}, \mathcal{E}_{\diamond} ,\mathbf{S}_{\diamond}\right).
\end{equation}
The node set $\ccalV_\diamond= \ccalV_T \times \ccalV$ is the Cartesian product between $\ccalV_T$ and $\ccalV$ with cardinality $|\mathcal{V}_{\diamond}|=NT$ and node $i_\diamond \in \ccalV$ represents the space-time location $(i,t)$; see Fig.~\ref{fig: product graph}. The edge set $\ccalE_\diamond \subseteq \mathcal{V}_\diamond \times \mathcal{V}_\diamond$ connects now space-time locations, which structure and respective GSO $\bbS_\diamond \in \reals ^{NT \times NT}$ are dictated by the type of product graph. Typical product graphs are~\cite{sandryhaila2014big}:
\begin{itemize}
	\item \textit{Kronecker product}: $\ccalG_{\otimes} = \ccalG_T {\otimes} \ccalG = (\ccalV_{\otimes}, \ccalE_\otimes, \bbS_\otimes)$ has the GSO $\bbS_{\otimes}=\bbS_T \otimes \bbS$. The number of edges is $|\ccalE_\otimes| = 2|\ccalE| |\ccalE_T|$. The Kronecker product translates the spatial coupling into a temporal coupling in which spatiotemporal nodes $i_\diamond = (i,t)$ and $j_\diamond = (j,t+1)$ are connected only if the spatial nodes $i,j$ are connected in $\ccalG$ and the temporal nodes $t$ and $t+1$ are connected in $\ccalG_T$; grey edges in Fig.~\ref{fig: product graph}.
	\medskip
	\item \textit{Cartesian product}: $\ccalG_{\times} = \ccalG_T {\times} \ccalG = (\ccalV_{\times}, \ccalE_\times, \bbS_\times)$ has the GSO $\mathbf{S}_{ \times}=\mathbf{S}_T \otimes \mathbf{I}_{N}+\mathbf{I}_{T} \otimes \mathbf{S}$. The number of edges is $|\ccalE_\times| = T|\ccalE| + N|\ccalE_T|$. The Cartesian product implies a temporal coupling by connecting each spatial node to itself in consecutive instants as dictated by $\ccalG_T$, i.e., the spatiotemporal nodes $i_\diamond = (i,t)$ and $j_\diamond = (i,t+1)$ are connected if time instants $t$ and $t+1$ are adjacent; red and green edges in Fig.~\ref{fig: product graph}.

	\medskip
	\item \textit{Strong product}: $\ccalG_{\boxtimes} = \ccalG_T {\boxtimes} \ccalG = (\ccalV_{\boxtimes}, \ccalE_\boxtimes, \bbS_\boxtimes)$ has the GSO $\bbS_{\boxtimes}=\mathbf{S}_T \otimes \mathbf{I}_{N}+\mathbf{I}_{T} \otimes \mathbf{S} + \bbS_T \otimes \bbS$. The number of edges is $|\ccalE_\boxtimes| = |\ccalE| |\ccalE_T| + T|\ccalE| + N|\ccalE_T|$. The strong product is the union of the Kronecker and Cartesian products. Here, spatiotemporal nodes $i_\diamond = (i,t)$ and $j_\diamond=(j,t+1)$ are connected if time instants $t$ and $t+1$ are adjacent in the temporal graph $\ccalG_T$ and spatial nodes $i$ and $j$ are either neighbors or $i=j$ in graph $\ccalG$; red, green, and grey edges in Fig.~\ref{fig: product graph}.
	
\end{itemize}

\begin{figure}[!t]
	\begin{center}
		\includegraphics[width=\linewidth]{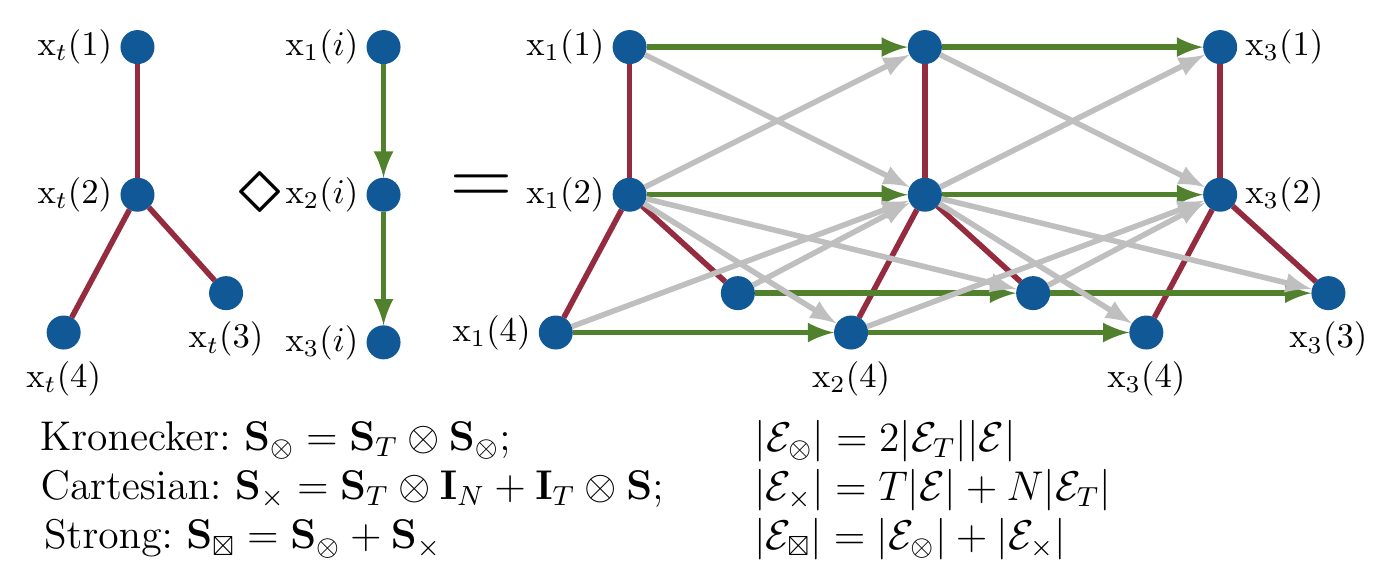}
	\end{center}
	\caption{Different product graphs and the formation of multivariate temporal data over them. (grey edges) Kronecker product $\ccalG_{\otimes} = \ccalG_T {\otimes} \ccalG$. (green and red edges) Cartesian product $\ccalG_{\times} = \ccalG_T {\times} \ccalG$. (All edges) Strong product $\ccalG_{\boxtimes} = \ccalG_T {\boxtimes} \ccalG.$}
	\label{fig: product graph}
\end{figure}

%
Column-vectorizing $\bbX$ into $\bbx_\diamond =\operatorname{vec}(\bbX) \in \reals^{NT}$ we obtain a product graph signal in which the $i$th entry is the signal value at the spatiotemporal node $i_\diamond$. Some of such values are illustrated in Fig.~\ref{fig: product graph}.

\subsection{Problem Motivation}\label{subsec_probMotiv}

We are interested in learning representations from the spatiotemporal data in $\bbX$ for a particular task such as inferring the source of a signal~\cite{Zhao2020anomalydet}, forecasting future values~\cite{mo:yu2017spatio}, or classifying time series~\cite{mo:yan2018STGCNN}. While the product graph is a viable mathematical tool to represent the structure in $\bbX$, we can also ignore it and process $\bbX$ with standard approaches such as the vector autoregressive (VAR) models~\cite{lutkepohl2005new}, multilayer perceptrons (MLPs)~\cite{friedman2017elements}, or recurrent neural networks (RNNs)~\cite{rumelhart1986learning}. However, such techniques face two main limitations. First, their number of parameters and computational complexity is of order $\ccalO(N^2)$ as they involve dense layers, suffering the curse of dimensionality~\cite{bronstein2021geometric}. Second, they depend on the fixed order of the time series. The latter is a challenge because the learned representations do not permute with the time series and, when a new time series becomes available, the whole system needs to be retrained. Consequently, these models are not transferable.

One strategy to overcome such challenges is to exploit separately the spatial and the temporal structure by first processing $\bbX$ column-wise with a graph neural network (GNN)~\cite{GNNsurvey} and successively with a temporal convolution or RNN~\cite{mo:chai2018bike,mo:khodayar2018spatio}. However, such strategy fails to exploit the spatiotemporal coupling in the data and leverages only the coupling in the learned higher-level representation of the GNN, ultimately, limiting the performance and interpretability. One way to overcome this challenge is to embed a graph structure within a recursive model such as VAR~\cite{mo:isufi2019VARMA} or RNN~\cite{mo:seo2018GRNN,mo:ruiz2020gatedGRNN}. Working with graph-based VAR models is limiting because they work in the linear space, while graph-based RNN works only with a fixed latent-space dimension of $N$ (number of the nodes) or require an additional pooling step. Both the latter are uncomfortable because RNN-type of algorithms work best with a lower-dimensional latent space and finding the appropriate pooling strategy adds an extra challenge to the whole system~\cite{GNNsurvey}. Another aspect of RNNs is that they suffer training issues in their vanilla form; hence, LSTMs~\cite{hochreiter1997long} or GRUs~\cite{cho2014learning} are needed, which put more emphasis on the temporal dependencies rather than on the joint spatiotemporal ones.

Differently, we propose a first-principle neural network solution that exploits product graphs as an inductive bias for the spatiotemproal coupling, that is modular to any pooling technique, and that puts equal emphasis on the joint spatiotemporal learning. Building on first principles allows for a mathematical tractability akin to that of the principled spatial CNN~\cite{gama2020elvinmagazine}. To achieve these goals, we leverage the shift-and-sum principle of the convolution operation~\cite{oppenheim1999discrete} to propagate information over the product graph but differently from a GCNN~\cite{gama2020elvinmagazine} we leverage the sparsity of the product graph to handle its large dimensionality. Working with the convolution principle allows also developing an architecture that is equivariant to permutations~\cite{gama2020elvinmagazine}, enjoys a spectral analysis via the graph-time Fourier transform~\cite{grassi2017time,elvin2dfilters} (Section~\ref{sec_gtcnn}), and that is stable to perturbations in the spatial support (Section~\ref{sec_stability}).
\section{Graph-Time Convolutional Neural Networks}
\label{sec_gtcnn}

In this section, we define the graph-time convolutional neural network (GTCNN) and discuss its properties both in the vertex and in the spectral domain.


\subsection{Graph-Time Convolutional Filters}\label{subsec_gtcf}

\begin{figure*}[!t]
	\centering
	\input{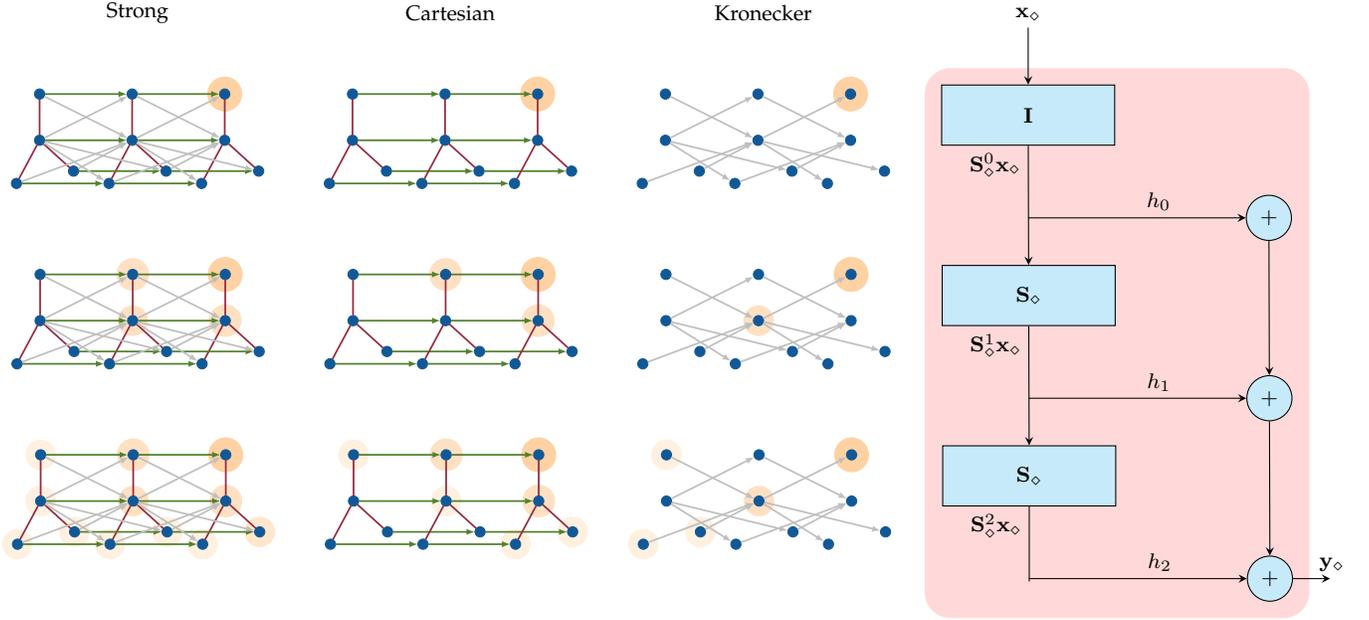}
	\caption{Graph time convolutional filter of order two illustrated for a Cartesian product graph.
		Each block shifts the signal $\bbx_\diamond$ over the product graph by its graph shift operator $\bbS_\diamond$. Each shift implies a neighborhood coverage highlighted in yellow for a particular node. Each shifted signal $\bbS_\diamond^k \bbx_{\diamond}$ is scaled by its filter coefficient $h_k$ and summed up to build the output $\bby_{\diamond}$. Changing the product graph implies a different spatiotemporal neighborhood coverage.}
	\label{fig:gt_filter}
\end{figure*}

The key element to build a GTCNN is the \emph{graph-time convolutional filter}. For a product graph $\ccalG_\diamond = (\ccalV_\diamond,\ccalE_\diamond,\bbS_\diamond)$ and signal $\bbx_\diamond$, the output $\bby_\diamond$ of a graph-time convolutional filter of order $K$ is
\begin{equation}\label{eq:graphtimefilter}
\bby_\diamond = \bbH(\bbS_\diamond)\bbx_\diamond = \sum_{k=0}^K h_k \bbS_\diamond^k \bbx_\diamond
\end{equation}
where $h_0,\dots,h_K$ are the parameters and $ \bbH(\bbS_\diamond):= \sum_{k=0}^K h_k \bbS_\diamond^k$ denotes the filtering matrix. The qualifier convolution for filter \eqref{eq:graphtimefilter} comes from the fact that it shifts the input $\bbx_\diamond$ up to $K$ times over the product graph $\bbS_\diamond^0\bbx_\diamond, \bbS_\diamond^1\bbx_\diamond, \ldots, \bbS_\diamond^K\bbx_\diamond$ and builds the output $\bby_\diamond$ as a scaled sum of these shifts; see Fig.~\ref{fig:gt_filter}. Since $\bbS_\diamond$ is spatiotemporally local, the shift $\bbS_\diamond\bbx_\diamond$ diffuses the signal from the spatiotemporal node $i_\diamond$ to any other immediate spatiotemporal neighbor $j_\diamond$. From recursion $\bbS_\diamond^k\bbx_\diamond = \bbS_\diamond(\bbS_\diamond^{k-1}\bbx_\diamond)$, we can see that higher-order powers $\bbS_\diamond^k$ diffuse the signal to spatiotemporal neighbors that up to $k-$hops. This implies that filter \eqref{eq:graphtimefilter} is spatiotemporally local in a neighborhood of radius $K$. The filter resolution depends on the order $K$ but also on the type of product graph. For instance, the Kronecker product ignores the information present at time $t$ limiting the spatial resolution but has a high temporal one; the Cartesian product brings in the spatial information of time $t$ but accounts for the temporal proximity only for signal values at the same node; the strong product accounts for both and needs a lower order $K$ to cover a particular neighborhood; Fig.~\ref{fig:gt_filter}. 

The locality of $\bbS_\diamond$ allows obtaining the output $\bby_\diamond$ in~\eqref{eq:graphtimefilter} with a computational cost of order $\ccalO(K|\ccalE_\diamond|)$. The latter can be achieved by leveraging the recursion $\bbS_\diamond^k\bbx_\diamond= \bbS_\diamond(\bbS_\diamond^{k-1}\bbx_\diamond)$. This complexity order is appealing despite we work with a large GSO and input vector. However, it should be noted that $|\ccalE_\diamond|$ is in turn governed by the type of product graph [cf. Fig.~\ref{fig: product graph}], thus making the strong product a suitable choice only if the spatial graph is highly sparse. It also follows from~\eqref{eq:graphtimefilter} that the order of parameters is $\ccalO(K)$ and it is independent on the spatial and temporal dimensions.

\subsection{GTCNNs}\label{subsec_gtcnn}

A graph-time convolutional neural network (GTCNN) is a composition of graph-time convolutional filters with pointwise nonlinearities. Specifically, consider an architecture composed of $L$ layers $\ell = 1, \ldots, L$ and let $\bbH_\ell(\bbS_\diamond) = \sum_{k=0}^K h_{k\ell} \bbS_\diamond^k$ be the filter used at layer $\ell$.
The GTCNN propagation rule is
\begin{equation}
\bbx_{\diamond,\ell} = \sigma \big( \bbH_\ell(\bbS_\diamond)\bbx_{\diamond,\ell-1}		\big) = \left(\sum_{k=0}^K h_{k\ell} \bbS_\diamond^k\bbx_{\diamond,\ell-1}		\right)
\end{equation}
where $\bbx_{\diamond,0}\coloneqq\bbx_\diamond$ is the GTCNN input. Fig.~\ref{fig:GTCNN_bd} illustrates a GTCNN of three layers. To augment the representation power, we consider multiple nodal features per layer in matrix $\bbX_{\diamond,\ell-1} = [\bbx_{\diamond,\ell-1}^1, \ldots, \bbx_{\diamond,\ell-1}^F] \in \reals^{NT \times F}$ where each column $\bbx_{\diamond,\ell-1}^g$ is a graph-time signal feature at layer $\ell-1$. These features are passed through a bank of graph-time convolutional filters and pointwise nonlinearities to yield the output features of layer $\ell$
\begin{equation}\label{eq.filter_bankMat}
\bbX_{\diamond,\ell} = \sigma\left(\sum_{k=0}^K \bbS_\diamond^k\bbX_{\diamond,\ell-1}\bbH_{k\ell}	\right)
\end{equation}
where $\bbH_{k\ell} \in \reals ^{F \times F}$ is the parameter matrix containing the coefficients of the filter bank at layer $\ell$. To ease the analysis of the filer bank in \eqref{eq.filter_bankMat}, we make explicit the input-output relation between the $f$th output feature $\bbx_{\diamond,\ell}^f$ and the $g$th input feature $\bbx_{\diamond,\ell-1}^g$ as
\begin{equation}\label{eq.eq.filter_bank}
\bbx_{\diamond,\ell}^f = \sigma \bigg(\sum_{g = 1}^F \bbH_\ell^{fg}(\bbS_\diamond)\bbx_{\diamond,\ell-1}^g	\bigg) = \sigma \bigg(	\sum_{g = 1}^F	\sum_{k=0}^K h_{k\ell}^{fg}\bbS_\diamond^k\bbx_{\diamond,\ell-1}^g	\bigg)
\end{equation}
for all $f = 1, \ldots, F$ and where $\bbH_\ell^{fg}(\bbS_\diamond) = \sum_{k=0}^K h_{k\ell}^{fg}\bbS_\diamond^k$ is one of the $F^2$ filters used in layer $\ell$.

Recursions \eqref{eq.filter_bankMat} (resp. \eqref{eq.eq.filter_bank}) are repeated for all layers. In the last layer $\ell = L$, we consider without loss of generality the number of output features is one, i.e., $\bbx_{\diamond,L} := \bbx_{\diamond,L}^1$. This output is a function of the input $\bbx_\diamond$ and the collection of all filter banks $\bbH_\ell^{fg}(\bbS_\diamond)$ [cf. \eqref{eq:graphtimefilter}]. Grouping all filters in the filter tensor $\ccalH(\bbS_\diamond) = \{	\bbH_\ell^{fg}(\bbS_\diamond)	\}_{\ell fg}$, we can define the GTCNN output as the mapping
\begin{equation}\label{eq:GTCNN}
\bbx_{\diamond,L}:= \Phi(\bbx_\diamond;\ccalH(\bbS_\diamond))\quad \text{with}\quad \ccalH(\bbS_\diamond) = \{	\bbH_\ell^{fg}(\bbS_\diamond)	\}_{\ell fg}.
\end{equation}
%
The GTCNN parameters are learned to minimize a loss $\ccalL(\bbx_L, \bbY)$ computed over a training set of input-output pairs $\ccalT = \{\bbx_\diamond, \bbY\}$.

As it follows from \eqref{eq:graphtimefilter}, the number of parameters in each filter is $K+1$. This gets scaled by the number of filters per layer $F^2$ and the number of layers $L$, yielding an order of $\ccalO(KF^2L)$ parameters defining the GTCNN. Likewise, the computational complexity for running the filter bank for $L$ layers is of order $\ccalO(KF^2L|\ccalE_\diamond|)$. Such orders are similar to those of conventional GCNNs working with time invariant signals. This is a consequence of exploiting the sparsity of product graphs and the spatiotemporal coupling via graph convolutional filters [cf. \eqref{eq:graphtimefilter}]. In the next section, we shall discuss how to \emph{learn} this spatiotemporal coupling.


\smallskip
\noindent
\textbf{Alternative graph-time neural networks.}
	Once the product graph is built, we can employ any aggregation scheme to collect spatiotemporal information. For the sake of completness, we detail here the message passing neural network (MPNN)~\cite{hamilton2017representation} and the graph attention network (GAT)~\cite{velivckovic2017graph}.

	\emph{MPNNs} update a spatiotemporal node's latent representation $\bbx_{\diamond,\ell}$ by passing a message from its neighborhood as
	\begin{equation}
		\bbm_{\diamond,\ell-1}^{(i)} = f(\bbx_{\diamond,\ell-1}[j]: (i,j)\in \ccalE_\diamond)
	\end{equation}
	where $\bbm_{\diamond,\ell-1}^{(i)}$ is the message vector for node $i$, $\bbx_{\diamond,\ell-1}[j]$ is the output of layer $\ell - 1$ at node $j$, and $f(\cdot)$ is a differentiable and permutation equivariant function.
	Each node's latent vector can be updated based on inferred messages and its current feature vector as
	\begin{equation}
		\bbx_{\diamond,\ell} = g(\bbx_{\diamond,\ell-1},\bbM_{\diamond,\ell-1})
	\end{equation}
	where $\bbM_{\diamond,\ell-1} = \{\bbm_{\diamond,\ell-1}^{(i)}\}$ collects all the messages for each node and $g(\cdot)$ is an arbitrary differentiable function (i.e., neural networks).
	Since we are considering the spatiotemporal product graph, the messages are including spatiotemporal information, and the type of product graph dictates which nodes are forwarding these messages.
	 
	\emph{GATs} use an attention mechanism to update latent representation of each node based on its neighborhood as
	\begin{equation}
		\bbx_{\diamond,l+1}[i] = \sigma\left(\sum_{(i,j)\in\ccalE_\diamond} \alpha_{ij}\bbH\bbx_{\diamond,\ell}[j]\right)
	\end{equation}
	where $\alpha_{ij}$ indicates attention coefficients and $\bbH \in \reals^{F\times F}$ is a linear transformation to map the features between layers.
	The type of product graph rules the attention coefficients $\alpha_{ij}$ and their calculation.
	
	In both MPNN and GAT models, we can split the rule over space and time to have a different message passing or attention scheme.
	All in all, these represent different principles to learn spatiotemporal representation with product graphs. We continue with convolutional principle since it provides theoretical tools for their analysis and leave the MPNNs and GATs to interested readers.

\begin{figure}[!t]
	\centering

\def \thisplotscale {1.77}
\def \unit {\thisplotscale cm}

\tikzstyle {Phi} = [rectangle, 
thin,
minimum width = 1.15*\unit, 
minimum height = \sumshift*\unit, 
anchor = west,
draw,
fill = cyan!20]

\tikzstyle {sum} = [circle, 
thin,
minimum width  = 0.3*\unit, 
minimum height = 0.3*\unit, 
anchor = center,
draw,
fill = cyan!20]

\def \deltax {1.6}
\def \deltay {1.2}
\def \deltagat {1.2}
\def \sumshift {0.4}

{\footnotesize\begin{tikzpicture}[x = 1*\unit, y = 1*\unit]
		
		\pgfdeclarelayer{bg}    
		\pgfsetlayers{bg,main}  

		\node (first) [] {};    
		
		\path (first) ++ (0.30*\deltax, 0) node (0) [Phi] {$\bbH_1(\bbS_\diamond)$};
		\path (0)     ++ (0.75*\deltax, 0) node (1) [Phi] {$\sigma(\cdot)$};
		\path (first) ++ (0.30*\deltax, -0.6*\deltax) node (2) [Phi] {$\bbH_2(\bbS_\diamond)$};
		\path (2)     ++ (0.75*\deltax, 0) node (3) [Phi] {$\sigma(\cdot)$};
		\path (first) ++ (0.30*\deltax, -0.6*2*\deltax) node (4) [Phi] {$\bbH_3(\bbS_\diamond)$};
		\path (4)     ++ (0.75*\deltax, 0) node (5) [Phi] {$\sigma(\cdot)$};
		
		\path (5.east) ++ (1*\sumshift*\deltax, 0) node [anchor=west] (last) [] {};
		
		\path (1) ++ (0, -0.3*\deltax) coordinate [] (1c) [] {};
		\path (0) ++ (0, -0.3*\deltax) coordinate [] (2c) [] {};
		\path (3) ++ (0, -0.3*\deltax) coordinate [] (3c) [] {};
		\path (2) ++ (0, -0.3*\deltax) coordinate [] (4c) [] {};

		\path[-stealth] (first) edge [above, pos=-0.5] node {$\bbx_{\diamond,0}=\bbx_\diamond$}                 (0);	
		\path[-stealth] (0)     edge [above right, pos=0.0] node {}   (1);	
		\path[-stealth] (2)     edge [above right, pos=0.0] node {}   (3);	
		\path[-stealth] (4)     edge [above right, pos=0.0] node {}   (5);		
		
		\path[] (1) edge [above, pos=0.45] node {} (1c);
		\path[-stealth] (2c) edge [above left, pos=0.45] node {$\bbx_{\diamond,1}$} (2);
		\path[] (2c) edge [above, pos=0.45] node {} (1c);
		
		\path[] (3) edge [above, pos=0.45] node {} (3c);
		\path[-stealth] (4c) edge [above left, pos=0.45] node {$\bbx_{\diamond,2}$} (4);
		\path[] (4c) edge [above, pos=0.45] node {} (3c);

		\path[-stealth] (5) edge [above right, pos=0.1] node 
		{$\Phi(\bbx_\diamond;\ccalH(\bbS_\diamond))$} ++ (0.8*\deltax, 0);
		
		\begin{pgfonlayer}{bg}
			\node[fit=(0)(1), draw,inner sep = 4*\deltax,fill = red!30, fill opacity = 0.5,draw = white, rounded corners=10](l1) {};
			\node[fit=(2)(3), draw,inner sep = 4*\deltax,fill = red!30, fill opacity = 0.5,draw = white, rounded corners=10](l2) {};
			\node[fit=(4)(5), draw,inner sep = 4*\deltax,fill = red!30, fill opacity = 0.5,draw = white, rounded corners=10](l3) {};
		\end{pgfonlayer}

		\node[below] at (l1.north) {\tiny Layer 1};
		\node[below] at (l2.north) {\tiny Layer 2};
		\node[below] at (l3.north) {\tiny Layer 3};
\end{tikzpicture}}
	\caption{GTCNN block diagram. On each layer a graph time convolutional filter [cf.~\eqref{eq:graphtimefilter}] is composed by a point-wise non-linearity, and all these layers are cascaded to generate the output.}
	\label{fig:GTCNN_bd}
\end{figure}
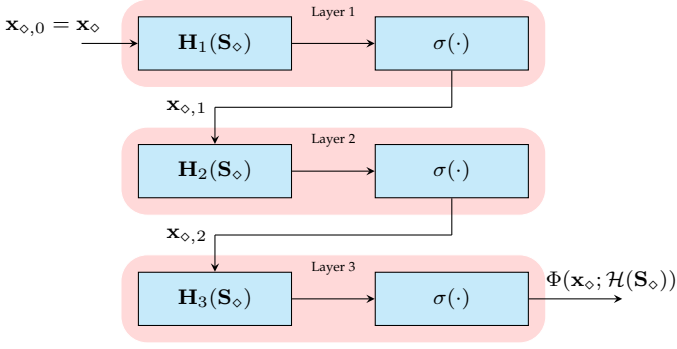

\subsection{Learning the Spatiotemporal Coupling}\label{subsec_learnSpatioTemp}

The GTCNN output in \eqref{eq:GTCNN} is influenced by the type of product graph but it is unclear which form is most suitable for a specific problem. To avoid imposing a wrong inductive bias, we consider the parametric product graph GSO
\begin{align}
\label{eq:generalgtfilter}
\mathbf{S}_{\diamond}= \sum_{i = 0}^1 \sum_{j = 0}^1 s_{ij}  \left(\mathbf{S}_{T}^{i} \otimes  \mathbf{S}^{j}  \right)
\end{align}
where by learning the four scalars $\{s_{ij}\}$ we learn the spatiotemporal coupling. The parametric product graph generalizes the three product graphs seen in Section~\ref{subsec_PG}. For instance, setting $s_{11}=1$ and the rest zero we get the Kronecker product. But allowing each $s_{ij}$ to take any real value weigh accordingly the impact of the spatial and temporal graphs into the final product graph. Note also that $s_{00} \neq 0$ implies the presence of spatiotemporal self-loops. If all $s_{ij}$'s are non-zero, the parametric product graph has $|\ccalE_\diamond| = |\ccalE_\boxtimes|+NT$ edges, which are $NT$ more edges than the strong product because of these self-loops. 

Building a GTCNN with the graph-time convolutional filter~\eqref{eq:graphtimefilter} but with the GSO~\eqref{eq:generalgtfilter} matches the spatiotemporal coupling with the task at hand. However, treating $\{s_{ij}\}$ as learnable parameters is practical only if we fix the filter order to $K = 1$.\footnote{This would be the spatiotemporal form of GCN~\cite{GCN_Welling}.}
For higher orders $K \ge 2$ this implies pre-computing the powers $\mathbf{S}_{\diamond}^k$ of~\eqref{eq:generalgtfilter} and storing them in the memory as the computational complexity of each multiplication is of order $\ccalO((NT)^3)$.
Yet, storing the powers in the memory is still impractical for large spatial graphs.
Another way to mitigate these issues is to rearrange the whole filter expression as shown next.

\begin{proposition}
	\label{prop1}
	Consider the spatial graph $\ccalG$ with shift operator $\bbS$ and the temporal graph $\ccalG_T$ with shift operator $\bbS_T$. The graph-time convolutional filter in~\eqref{eq:graphtimefilter} operating with the parametric product graph in~\eqref{eq:generalgtfilter} is a particular case of
	\begin{equation}
	\bbH(\bbS,\bbS_T) =
	\sum_{k=0}^{\bar{K}}
	\sum_{l=0}^{\tilde{K}}
	h_{kl} (\bbS_T^l \otimes \bbS^k)
	\label{eq:graphtimegeneralfilter}
	\end{equation}
where $\bar{K}$ and $\tilde{K}$ are the orders over the spatial and temporal graphs respectively and $\{h_{kl}\}$ are the parameters.
\end{proposition}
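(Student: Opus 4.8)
The plan is to expand the filter $\bbH(\bbS_\diamond)=\sum_{p=0}^K h_p \bbS_\diamond^p$ in powers of the parametric shift operator and show that every power $\bbS_\diamond^p$ collapses onto the separable basis $\{\bbS_T^l\otimes\bbS^k\}$, so the entire filter lands inside the family~\eqref{eq:graphtimegeneralfilter}. First I would write the parametric GSO~\eqref{eq:generalgtfilter} out in full,
\begin{equation*}
\bbS_\diamond = s_{00}\,\bbI_{NT} + s_{01}(\bbI_T\otimes\bbS) + s_{10}(\bbS_T\otimes\bbI_N) + s_{11}(\bbS_T\otimes\bbS),
\end{equation*}
i.e.\ a linear combination of the four Kronecker terms $\bbS_T^i\otimes\bbS^j$ with $i,j\in\{0,1\}$.

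The key tool is the mixed-product property $(\bbA\otimes\bbB)(\bbC\otimes\bbD)=(\bbA\bbC)\otimes(\bbB\bbD)$, which gives $(\bbS_T^{i_1}\otimes\bbS^{j_1})(\bbS_T^{i_2}\otimes\bbS^{j_2})=\bbS_T^{i_1+i_2}\otimes\bbS^{j_1+j_2}$. Expanding $\bbS_\diamond^p$ as a product of $p$ copies of the four-term sum, every resulting monomial has the form $(\text{coeff})\cdot(\bbS_T^{\,i_1+\cdots+i_p}\otimes\bbS^{\,j_1+\cdots+j_p})$ with each $i_r,j_r\in\{0,1\}$. Since $0\le\sum_r i_r\le p$ and $0\le\sum_r j_r\le p$, this yields
\begin{equation*}
\bbS_\diamond^p=\sum_{k=0}^{p}\sum_{l=0}^{p} c^{(p)}_{kl}\,(\bbS_T^l\otimes\bbS^k),
\end{equation*}
where each $c^{(p)}_{kl}$ is a fixed polynomial in the coupling scalars $\{s_{ij}\}$ gathering all monomials of spatial degree $k$ and temporal degree $l$.

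Substituting into $\bbH(\bbS_\diamond)=\sum_{p=0}^K h_p\bbS_\diamond^p$ and interchanging the order of summation, I would set $\bar K=\tilde K=K$ and define $h_{kl}:=\sum_{p=\max(k,l)}^{K} h_p\,c^{(p)}_{kl}$; collecting terms reproduces exactly the form~\eqref{eq:graphtimegeneralfilter} (after matching the spatial index $k$ and the temporal index $l$ of that equation). This establishes the inclusion. To justify the phrase \emph{particular case}, I would then observe that the induced map $(\{h_p\},\{s_{ij}\})\mapsto\{h_{kl}\}$ forces the $(K+1)^2$ coefficients of~\eqref{eq:graphtimegeneralfilter} into the image of a map parametrised by only $K+5$ free scalars, so~\eqref{eq:graphtimegeneralfilter} with independently chosen $h_{kl}$ is strictly more expressive than the parametric-product-graph filter.

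There is no genuine analytic obstacle here; the content is the mixed-product identity together with bookkeeping. The one point requiring care is the degree bound — confirming that no power of $\bbS_T$ or $\bbS$ exceeding $p$ can appear in $\bbS_\diamond^p$ — which is immediate because each of the $p$ factors raises each exponent by at most one, but it is precisely what keeps the inner sums finite and triangular and thereby makes the collected coefficients $h_{kl}$ well defined.
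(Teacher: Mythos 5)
Your proposal is correct and follows essentially the same route as the paper's proof: expand $\bbS_\diamond^p$ via the multinomial theorem, use the Kronecker mixed-product property to collapse each monomial onto $\bbS_T^l\otimes\bbS^k$, and absorb the resulting coefficients into new parameters $h_{kl}$. If anything, your bookkeeping over the full rectangle $0\le k,l\le p$ is slightly cleaner than the paper's change of variables (which imposes $l+p=k$ even though the exponent sum need not equal $k$ once $k_{00}$ and $k_{11}$ are accounted for), and your closing remark on parameter counting makes the phrase \emph{particular case} more precise than the paper does.
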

\begin{proof}
	See Appendix~\ref{sec:gtfilter_proof}.
\end{proof}

Building a GTCNN with a filter tensor $\ccalH(\bbS, \bbS_T) = \{	\bbH_\ell^{fg}(\bbS,\bbS_T)	\}_{\ell fg}$ yields an architecture that is more flexible than \eqref{eq:GTCNN}. This is because of:
\begin{enumerate}
\item Filter $\bbH(\bbS,\bbS_T)$ [cf. \eqref{eq:graphtimegeneralfilter}] has more freedom to control the spatial and temporal resolution in each layer through orders $\bar{K}$ and $\tilde{K}$. Differently, filter $\bbH(\bbS_\diamond)$ [cf. \eqref{eq:graphtimefilter}] does not allow for such an independent control, in which order $K$ influences the spatiotemporal resolution in a coupled manner. This allows exploiting more resolution in a particular domain.
\smallskip
\item Filter $\bbH(\bbS,\bbS_T)$ learns how the spatiotemporal coupling influences the graph-time convolution through parameters $\{h_{kl}\}$. This happens for each layer and feature. This is also more powerful than learning $\{s_{ij}\}$ and $\{h_k\}$ disjointly as in $\bbH(\bbS_\diamond)$ because it matches the importance of the multihop resolutions with the spatiotemporal links.
\smallskip
\item Filter $\bbH(\bbS,\bbS_T)$ enjoys the same computational cost of $\bbH(\bbS_\diamond)$. Computing the output $\bby_\diamond = \bbH(\bbS,\bbS_T)\bbx_\diamond$ requires computing all shifts $\bbx_\diamond^{(k,l)} = (\bbS_T^l \otimes \bbS^k) \bbx_\diamond$ for all $k\in [\bar{K}]$ and $l\in [\tilde{K}]$.
These shifts can be written in a recursive manner as
\begin{equation}
	\bbx_\diamond^{(k,l)} = (\bbS_T \bbS_T^{l-1} \otimes \bbS \bbS^{k-1}) \bbx_\diamond.
\end{equation}
Exploiting the properties of the Kronecker product\footnote{$(\bbA \otimes \mathbf{B})(\mathbf{C} \otimes \mathbf{D}) = \bbA \mathbf{C} \otimes \mathbf{B}\mathbf{D}$~\cite{petersen2008matrix}.}, we can write the latter as
\begin{align}
\bbx_\diamond^{(k,l)} &= (\bbS_T\otimes\bbS)(\bbS_T^{l-1} \otimes \bbS^{k-1})\bbx_\diamond \nonumber \\
&= (\bbS_T \otimes \bbI_N)(\bbI_T \otimes \bbS)(\bbS_T^{l-1} \otimes \bbS^{k-1})\bbx_\diamond.
\end{align}
Thus, we can compute $\bbx_\diamond^{(k,l)}$ recursively as
\begin{align}
\bbx_\diamond^{(k,l)} &=  (\bbS_T \otimes \bbI_N)\bbx_\diamond^{(k,l-1)}\nonumber\\
&= (\bbS_T \otimes \bbI_N)(\bbI_T \otimes \bbS)\bbx_\diamond^{(k-1,l-1)}
\label{eq:garphtimerecursion}
\end{align}
where $\bbx_\diamond^{(k,l-1)} = (\bbI_T \otimes \bbS)\bbx_\diamond^{(k-1,l-1)}$ is the spatially shifted signal and $\bbx_\diamond^{(0,0)} \coloneqq \bbx_\diamond$. Recursion~\eqref{eq:garphtimerecursion} implies that we can compute $\bbx_\diamond^{(k,l)}$ from $\bbx_\diamond^{(k-1,l-1)}$ with a cost of $\ccalO(T|\ccalE| + N|\ccalE_T|)$. Since we need to perform the latter for all $k\in [\bar{K}]$ and $l\in [\tilde{K}]$, we have a computational cost of order $\ccalO(\bar{K}T|\ccalE| + \tilde{K}N|\ccalE_T|)$, which is linear in the product graph dimension. The GTCNN cost is $F^2L$ times the latter.
\end{enumerate}

\subsection{Properties}\label{subsec_prop}

The convolution principle allows studying two fundamental properties: equivariances to permutations in the vertex domain and the filters' frequency response in the frequency domain. We shall discuss these properties for a GTCNN with the more general filter $\bbH(\bbS,\bbS_T)$ [c.f.~\eqref{eq:graphtimegeneralfilter}] but the results hold also for filter $\bbH(\bbS_\diamond)$ [c.f.~\eqref{eq:graphtimefilter}].

\medskip
\noindent\textbf{Permutation equivariance}
allows assigning an arbitrary ordering to the nodes.
This is a desirable property since a graph does not change by the permutation of its nodes, hence the GTCNN output should stay unchanged up to a reordering.
The GTCNN is permutation equivariant as shown by the following proposition.
\begin{proposition}
	\label{prop2}
	Let $\bbx_\diamond = \textnormal{vec}(\bbX)$ be the signal over the product graph $\ccalG_\diamond$ with shift operator $\bbS_\diamond = \bbS_T \diamond \bbS$. Consider also the output of a graph-time filter $\bbH(\bbS_T,\bbS)\bbx_{\diamond}$. Then, for a permutation matrix $\bbP$ belonging to the set
	\begin{equation}
		\ccalP = \{\bbP = \{0,1\}^{N\times N} \colon \bbP\bbone=\bbone, \bbP^\top\bbone=\bbone\} \nonumber
	\end{equation}
	it holds that
	\begin{equation}
		\bbP\!^\top\!\textnormal{vec}\!^{-\!1}\!(\Phi(\bbx_\diamond;\ccalH(\bbS_T,\bbS)))\! =\!\Phi(\bbx_\diamond;\ccalH(\bbS_T,\bbP\!^\top\!\bbS\bbP))\textnormal{vec}(\bbP\!^\top\!\bbX).
	\end{equation}
\end{proposition}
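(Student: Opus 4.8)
The plan is to reduce the whole statement to a single covariance identity for the shift operator and then propagate it through the filter bank, the pointwise nonlinearity, and the layer composition. Since $\bbP$ relabels only the $N$ spatial nodes while the $T$ temporal nodes stay fixed, the first step is to lift it to the product graph by setting $\bbP_\diamond := \bbI_T \otimes \bbP$. This $\bbP_\diamond$ is again a permutation matrix, hence orthogonal with $\bbP_\diamond^\top\bbP_\diamond = \bbI_{NT}$, and the vectorization identity $(\bbB^\top \otimes \bbA)\textnormal{vec}(\bbX) = \textnormal{vec}(\bbA\bbX\bbB)$ gives $\bbP_\diamond^\top \bbx_\diamond = \textnormal{vec}(\bbP^\top\bbX)$, which is precisely the permuted input appearing in the claim.

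Next I establish the key covariance of the filtering matrix. Using the mixed-product property $(\bbA \otimes \bbB)(\bbC \otimes \bbD) = \bbA\bbC \otimes \bbB\bbD$ together with $(\bbP^\top\bbS\bbP)^k = \bbP^\top\bbS^k\bbP$ (valid because $\bbP\bbP^\top = \bbI_N$), each Kronecker term of the general filter \eqref{eq:graphtimegeneralfilter} obeys
\[
\bbP_\diamond^\top (\bbS_T^l \otimes \bbS^k)\bbP_\diamond = \bbS_T^l \otimes (\bbP^\top\bbS\bbP)^k ,
\]
since $\bbP_\diamond = \bbI_T \otimes \bbP$ acts only on the spatial factor and leaves $\bbS_T^l$ untouched. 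Summing over $k,l$ yields $\bbP_\diamond^\top \bbH(\bbS,\bbS_T)\bbP_\diamond = \bbH(\bbP^\top\bbS\bbP,\bbS_T)$, and multiplying on the right by $\bbP_\diamond^\top$ and invoking orthogonality gives the single-filter equivariance
\[
\bbH(\bbP^\top\bbS\bbP,\bbS_T)\,(\bbP_\diamond^\top\bbx_\diamond) = \bbP_\diamond^\top\,\bbH(\bbS,\bbS_T)\bbx_\diamond .
\]
By Proposition~\ref{prop1} the same relation specializes to the parametric-product filter $\bbH(\bbS_\diamond)$, so it suffices to argue for $\bbH(\bbS,\bbS_T)$.

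To lift this to an entire layer and then to the network, I note that in the feature bank \eqref{eq.filter_bankMat} the permutation $\bbP_\diamond$ acts on the $NT$ node axis while the mixing matrices $\bbH_{k\ell}$ act on the $F$ feature axis; the two commute, so the covariance identity passes through the sum unchanged. The nonlinearity $\sigma$ is pointwise and $\bbP_\diamond^\top$ merely reorders entries, whence $\sigma(\bbP_\diamond^\top \bbM) = \bbP_\diamond^\top\sigma(\bbM)$. Thus a permuted input fed under the permuted shift produces the permuted output of that layer. Iterating over $\ell = 1,\dots,L$ gives $\Phi(\bbP_\diamond^\top\bbx_\diamond;\ccalH(\bbP^\top\bbS\bbP,\bbS_T)) = \bbP_\diamond^\top\Phi(\bbx_\diamond;\ccalH(\bbS,\bbS_T))$, and applying $\textnormal{vec}^{-1}$, which turns $\bbP_\diamond^\top = \bbI_T\otimes\bbP^\top$ into left-multiplication by $\bbP^\top$, recovers the stated identity.

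The main obstacle I anticipate is purely the index bookkeeping: I must keep the column-major convention of $\textnormal{vec}$ consistent so that the $N\times N$ spatial permutation lifts to $\bbI_T\otimes\bbP$ rather than $\bbP\otimes\bbI_T$, and I must confirm that the temporal factor $\bbS_T^l$ genuinely commutes through $\bbP_\diamond$. Once the covariance identity for a single Kronecker term is pinned down, the remaining steps (feature mixing, nonlinearity, layer induction) are routine.
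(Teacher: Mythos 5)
Your proposal is correct and follows essentially the same route as the paper's proof: both lift the spatial permutation to $\bbI_T\otimes\bbP$, use $(\bbP^\top\bbS\bbP)^k=\bbP^\top\bbS^k\bbP$ together with the Kronecker mixed-product property to obtain the filter covariance, cancel the inner permutation against $\textnormal{vec}(\bbP^\top\bbX)$, and pass the result through the pointwise nonlinearity and layer composition. Your treatment of the feature-mixing matrices and the explicit layer induction is slightly more careful than the paper's closing remark, but the underlying argument is identical.
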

\begin{proof}
	See Appendix.~\ref{sec:permequi_proof}.
\end{proof}

That is, the GTCNN operating on a multivariate time-series over a spatial graph $\bbS$ has the equally permuted output of the GTCNN operating on permuted spatial graph $\bbP^\top\bbS\bbP$.
This property allows the GTCNN to exploit the spatiotemporal data dependencies and symmetries encoded in the product graph to enhance learning~\cite{bronstein2021geometric}.

\medskip
\noindent\textbf{Spectral analysis}
of filter \eqref{eq:graphtimegeneralfilter} can help us to study their behaviors and what spectral features they learn for a specific task.
It also provides a fundamental framework for further analyzing the robustness of the GTCNNs to perturbations as we shall detail in the next section.
To do so, consider the eigendecomposition of the spatial GSO $\bbS= \bbV \bbLambda \bbV^{\herm}$ with eigenvevtors $\bbV = [\bbv_1, \ldots, \bbv_N]$ and eigenvalues $\bbLambda = \diag(\lambda_1,\dots,\lambda_N)$.
Consider also the eigendecomposition of the temporal GSO $\bbS_T= \bbV_T \bbLambda_T \bbV_T^{\herm}$ where $\bbV_T=[\bbv_{T,1},\dots,\bbv_{T,T}]$ is the matrix of temporal eigenvectors and $\bbLambda_T = \diag(\lambda_{T,1},\dots,\lambda_{T,T})$ is that of temporal eigenvalues. 
Then, the GSO of the product graph $\bbS_\diamond$ can be eigendecomposed as
\begin{equation}
\bbS_\diamond= \bbV_\diamond \bbLambda_\diamond\bbV_\diamond^\herm
= (\bbV_T \otimes \bbV)(\boldsymbol{\Lambda}_T \diamond \boldsymbol{\Lambda})(\bbV_T \otimes \bbV)^{\herm}
\end{equation}
with eigenvectors are $\bbV_\diamond=\bbV_T \otimes \bbV$ and eigenvalues $\bbLambda_\diamond = \boldsymbol{\Lambda}_T \diamond \boldsymbol{\Lambda}$ dictated by the product graph.
Then, we can define the graph-time Fourier transform of signal $\bbx_\diamond$ as $\tilde{\bbx}_{\diamond} =  (\bbV_T \otimes \bbV)^\herm \bbx_\diamond $~\cite{grassi2017time,sandryhaila2014big}.
This transform represents the variation of the multivariate time-series over the product graph in terms of product graph eigenvectors.
Using these concepts, the following proposition characterizes the spectral behavior of filter~\eqref{eq:graphtimegeneralfilter}.

\begin{proposition}
	\label{prop3}
	Consider the eigendecomposition of the spatial GSO $\bbS = \bbV\bbLambda\bbV^\herm$ and the temporal GSO $\bbS_T = \bbV_T\bbLambda_T\bbV_T^\herm$.
	Consider also the graph-time Fourier transform of the output signal $\tilde{\bby}_\diamond = (\bbV_T \otimes \bbV)^\herm \bby_{\diamond}$ and the input signal  $\tilde{\bbx}_\diamond = (\bbV_T \otimes \bbV)^\herm \bbx_{\diamond}$.
	Then, the filtering operation~\eqref{eq:graphtimegeneralfilter} operates in the graph-time frequency domain as
	\begin{equation}
		\tilde{\bby}_{\diamond}= h\left(\boldsymbol{\Lambda}_T, \boldsymbol{\Lambda}\right) \tilde{\bbx}_\diamond
	\end{equation}
	with the filter frequency response matrix
	\begin{equation}
		h\left(\boldsymbol{\Lambda}_T, \boldsymbol{\Lambda}\right)
		= \sum_{k=0}^{\bar{K}} \sum_{l=0}^{\tilde{K}}h_{kl}(\bbLambda_T^l \otimes \bbLambda^k)
		\label{eq:spectral}
	\end{equation}
	with diagonal entries  $\left[h\left(\boldsymbol{\Lambda}_T, \boldsymbol{\Lambda}\right)\right]_{k k}=h\left( \lambda_{Tt}, \lambda_{i}\right)$ and $k=N(t-1)+i$ for $i=1, \ldots, N$ and $t=1, \ldots, T$.
\end{proposition}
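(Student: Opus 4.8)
The plan is to diagonalize the filtering matrix $\bbH(\bbS,\bbS_T)$ in the common eigenbasis $\bbV_T \otimes \bbV$ and then read off its action in the transformed coordinates. The key structural observation is that every summand $\bbS_T^l \otimes \bbS^k$ is diagonalized by the \emph{same} unitary $(\bbV_T \otimes \bbV)$, so the entire filter shares this eigenbasis and its frequency response is obtained simply by collecting the corresponding eigenvalues.

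First I would use the eigendecompositions together with unitarity of $\bbV$ and $\bbV_T$ to write the matrix powers as $\bbS^k = \bbV\bbLambda^k\bbV^\herm$ and $\bbS_T^l = \bbV_T\bbLambda_T^l\bbV_T^\herm$. Substituting these into a generic term $\bbS_T^l \otimes \bbS^k$ and applying the mixed-product property of the Kronecker product (the footnoted identity $(\bbA\otimes\bbB)(\bbC\otimes\bbD)=\bbA\bbC\otimes\bbB\bbD$) twice gives
\[
\bbS_T^l \otimes \bbS^k = (\bbV_T \otimes \bbV)(\bbLambda_T^l \otimes \bbLambda^k)(\bbV_T \otimes \bbV)^\herm.
\]
Summing over $k$ and $l$ and factoring out the common eigenvector matrices then yields
\[
\bbH(\bbS,\bbS_T) = (\bbV_T \otimes \bbV)\,h(\bbLambda_T,\bbLambda)\,(\bbV_T \otimes \bbV)^\herm,
\]
with $h(\bbLambda_T,\bbLambda)$ exactly the matrix in~\eqref{eq:spectral}.

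With this decomposition in hand, I would multiply $\bby_\diamond = \bbH(\bbS,\bbS_T)\bbx_\diamond$ on the left by $(\bbV_T \otimes \bbV)^\herm$; the identity $(\bbV_T \otimes \bbV)^\herm(\bbV_T \otimes \bbV) = \bbI$ collapses the right conjugation factor and leaves $\tilde{\bby}_\diamond = h(\bbLambda_T,\bbLambda)\tilde{\bbx}_\diamond$, which is the claimed spectral relation. The remaining step is the diagonal-entry bookkeeping: since a Kronecker product of diagonal matrices is diagonal, each $\bbLambda_T^l \otimes \bbLambda^k$ carries the value $\lambda_{T,t}^l\lambda_i^k$ at position $N(t-1)+i$ under the standard Kronecker ordering, and summing these against the coefficients $h_{kl}$ reconstructs $h(\lambda_{T,t},\lambda_i)$ in that slot.

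The only delicate point I anticipate is tracking the Kronecker index convention consistently, so that the diagonal entry at position $N(t-1)+i$ is correctly paired with the eigenvalue pair $(\lambda_{T,t},\lambda_i)$. Everything else follows directly from the shared eigenbasis and unitarity and requires no inequalities or estimates.
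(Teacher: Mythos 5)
Your proposal is correct and follows essentially the same route as the paper: both rest on the mixed-product property of the Kronecker product and the eigendecompositions $\bbS^k = \bbV\bbLambda^k\bbV^\herm$, $\bbS_T^l = \bbV_T\bbLambda_T^l\bbV_T^\herm$ to move $(\bbV_T\otimes\bbV)^\herm$ past each filter term, the only cosmetic difference being that you fully diagonalize $\bbH(\bbS,\bbS_T)$ before applying it to $\bbx_\diamond$ whereas the paper pushes the transform through the input-output relation term by term. Your closing remark on the index bookkeeping $k = N(t-1)+i$ matches the statement's convention, so nothing is missing.
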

\begin{proof}
	See Appendix.~\ref{sec:spectral}.
\end{proof}

That is, the graph-time filtered version $\bby_{\diamond}$ of $\bbx_{\diamond}$ with the filter $\bbH(\bbS_T,\bbS)$ corresponds to an element-wise multiplication in the graph-time frequency representation, i.e., $\hat{\bby}_{ti} = h([\bbLambda_T]_t,[\bbLambda]_i)\hat{\bbx}_{ti}$.
This is a direct extension of the convolution theorem~\cite{oppenheim1999discrete} to the spatiotemporal setting, ultimately, justifying the qualifier convolution for filter~\eqref{eq:graphtimegeneralfilter}. It shows that by learning the parameters ${h_{kl}}$ we not only learn spatiotemporal coupling in the vertex domain with desirable properties but also the frequency response of these filters to extract relevant spectral patterns.

\section{Stability Analysis}
\label{sec_stability}
We now investigate the stability properties of GTCNNs w.r.t. perturbations on the spatial graph to characterize its learning capabilities.
Analyzing stability is important as we often may not have access to the ground truth spatial graph.
In some cases, the inferred graph may be imperfect and we have to train the GTCNN over a perturbed graph;
in other cases, practical physical networks (e.g., water or power networks) slightly differ from the one we train the GTCNN, e.g., because of model mismatches.
So, having a stable GTCNN is desirable to perform the task reliably and allow transference~\cite{Ruiz2021GNN}.

We analyze the stability w.r.t. the \emph{relative perturbation} over the spatial graph
\begin{equation}
	\hat{\bbS} = \bbS + (\bbE\bbS + \bbS\bbE)
	\label{relperturb}
\end{equation}
where $\hat{\bbS}$ is the perturbed GSO and $\bbE$ is the perturbation matrix~\cite{gamamain}.
Such model suggests that the graph perturbation depends on its structure, i.e., a node with more connected edges is relatively more prone to more perturbation.  
To characterize the GTCNN stability,
we consider graph-time convolutional filters with a frequency response that varies slightly in the high spatial frequencies (eigenvalues $\bbLambda$) and arbitrarily in the temporal one, see Fig.~\ref{fig:Lipschitz}. 
\begin{definition}
	\label{def:intlip}
	Given a graph-time filter with an analytic frequency response $h(\lambda_T,\lambda)$ [cf.~\eqref{eq:spectral}]. We say this filter is graph integral Lipschitz if there exists constant $C>0$ such that for all graph frequencies $\lambda_1,\lambda_2 \in \bbLambda$, it holds that
	\begin{equation}
		|h(\lambda_T,\lambda_2)-h(\lambda_T,\lambda_1)| \leq C \frac{|\lambda_2 - \lambda_1|}{|\lambda_2 + \lambda_1|/2}.
		\label{eq:lipmain}
	\end{equation}
\end{definition}
Expression~\eqref{eq:lipmain} implies that the filter's two-dimensional frequency response is Lipschitz in any interval $(\lambda_1, \lambda_2)$ of graph frequencies where the Lipschitz constant depends on their gap $|\lambda_2 + \lambda_1|/2$. The latter is similar to the integral Lipschitz property for GCNNs working on time invariant signals~\cite{gamamain} since we treat spatial perturbations, but it does not restrict the temporal behavior of the filter.%
\footnote{If we would ignore the product graph structure and discuss the stability results using~\cite{gamamain} straightforwardly the filter needed be spatiotemporal integral Lipschitz. This in turn affects discriminability~\cite{Ruiz2021GNN}.}
For the two-dimensional frequency response, this implies that its partial derivative is restricted as 
\begin{equation}
	\left|\lambda\frac{\partial h(\lambda_T,\lambda)}{\partial \lambda}\right| \leq C,	\label{eq:lipsch}
\end{equation}
i.e.,
the filters in a GTCNN have a frequency response that cannot vary drastically on high graph frequencies for all temporal frequencies but the filter can vary arbitrary over the temporal frequencies.
Fig.~\ref{fig:Lipschitz} illustrates this property.

\begin{definition}
	\label{def:norm}
	Consider a graph-time convolutional filter with an analytic frequency response $h(\lambda_T,\lambda)$ [cf.~\eqref{eq:spectral}]. We say this filter has a normalized spectral response if $|h(\lambda_T,\lambda)| \leq 1$ for all $\lambda$, $\lambda_T$.
\end{definition}
This implies that the filter's gain $B = \|\bbH(\bbS_T,\bbS)\|$ to be less or equal to 1, $B\leq 1$, w.r.t. an $l_2$-measure.
Def.~\ref{def:norm} is required as the output of each layer should not magnify the input norm, otherwise, the system will become less stable as the number of layers increase, even in absence of perturbation.
Otherwise, constant $B$ would also appear in the following stability result of the GTCNN.


\begin{figure}[!t]
	\centering
	\includegraphics[width=0.7\linewidth]{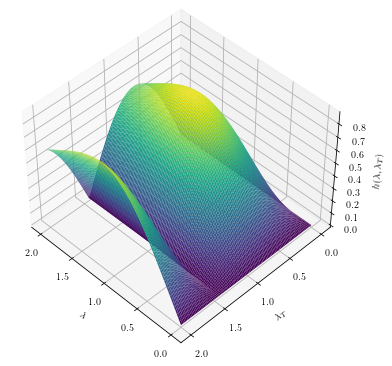}
	\caption{A spatial Lipschitz graph-time filter. The frequency response varies smoothly over high graph frequencies, while it has sudden changes on temporal frequencies.}
	\label{fig:Lipschitz}
\end{figure}

\begin{theorem}
	\label{filterstanility}
	Let $\bbS = \bbV\bbLambda\bbV^\herm$ and $\bbS_T\bbV_T\bbLambda_T\bbV_T^\herm$ be spatial and temporal graph shift operators, respectively.
	Let also $\hat{\bbS}$ be the relatively perturbed graph shift operator [cf.~\eqref{relperturb}].
	Consider the error matrix has the eigendecomposition
	$\bbE = \bbU \bbM \bbU^\herm$
	where $\bbU$ are the eigenvectors and $\bbM$ is the diagonal matrix of eigenvalues. Assume the error matrix has a bounded operator norm $\|\bbE\|\leq \eps.$
	Consider a GTCNN with L layers, F features and integral Lipschitz spatial-temporal graph filters [cf. Def.~\ref{def:intlip}] with normalized spectral responses [cf. Def.~\ref{def:norm}].
	Let also its nonlinearities be 1-Lipschitz, i.e., $|\sigma(a)-\sigma(b)| < |a-b|$, such as ReLU.
	Then, the distance between the GTCNN outputs $\Phi(\bbx_\diamond;\bbS_T,\bbS,\ccalH)$ on the nominal graph and $\Phi(\bbx_\diamond;\bbS_T,\hat{\bbS},\ccalH)$ on the perturbed graph is upper bounded by
	\begin{align}
	\|\Phi(\bbx_\diamond;\ccalH(\bbS_T,\bbS)) - \Phi(\bbx_\diamond;\ccalH(\bbS_T,\hat{\bbS})) \|_2 \leq LF^{L-1}\Delta\eps\|\bbx_\diamond\|_2
	\label{stabilitybound}
	\end{align}
	where $\Delta = 2C(1+\delta T\sqrt{N})$, $\delta = (\|\bbU-\bbV\|^2 +1)^2-1$ indicates the eigenvector misalignment between the spatial graph shift operator $\bbS$ and error matrix $\bbE$, $N$ is the size of spatial graph, and $T$ is the size of temporal graph.
\end{theorem}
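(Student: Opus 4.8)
The plan is to follow the two-stage template for GNN stability: first establish a stability bound for a \emph{single} graph-time convolutional filter under the relative perturbation \eqref{relperturb}, and then propagate this bound through the $L$ layers and $F$ feature channels using the normalized-response and $1$-Lipschitz-nonlinearity assumptions. Since the temporal shift $\bbS_T$ is untouched, the filter difference is $\bbH(\hat\bbS,\bbS_T)-\bbH(\bbS,\bbS_T)=\sum_{k,l}h_{kl}\,\bbS_T^l\otimes(\hat\bbS^k-\bbS^k)$, so the whole analysis reduces to controlling the spatial powers $\hat\bbS^k-\bbS^k$. First I would expand $\hat\bbS=\bbS+\bbE\bbS+\bbS\bbE$ to first order, obtaining $\hat\bbS^k-\bbS^k=\sum_{r=0}^{k-1}\bbS^r(\bbE\bbS+\bbS\bbE)\bbS^{k-1-r}+\ccalO(\eps^2)$, and defer the quadratic remainder to a separate routine bound, legitimate because $\|\bbE\|\le\eps$.

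For the filter-stability step I would move to the product-graph eigenbasis $\bbV_T\otimes\bbV$. Since $\bbS_T^l$ is diagonal in $\bbV_T$, the projected difference operator is block-diagonal across temporal frequencies $\lambda_{T,t}$, and the $(i,j)$ spatial entry of the block at temporal mode $t$ equals $[\bbV^\herm\bbE\bbV]_{ij}(\lambda_i+\lambda_j)\tfrac{\lambda_i^k-\lambda_j^k}{\lambda_i-\lambda_j}$ summed against $h_{kl}\lambda_{T,t}^l$, which telescopes to $[\bbV^\herm\bbE\bbV]_{ij}(\lambda_i+\lambda_j)\tfrac{h(\lambda_{T,t},\lambda_i)-h(\lambda_{T,t},\lambda_j)}{\lambda_i-\lambda_j}$. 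The decisive move is to split this into diagonal and off-diagonal parts. On the diagonal ($i=j$) the factor becomes $2\lambda_i\,\partial_\lambda h(\lambda_{T,t},\lambda_i)$, bounded by $2C$ through \eqref{eq:lipsch}; off the diagonal the finite-difference form of Def.~\ref{def:intlip} gives $\big|(\lambda_i+\lambda_j)\tfrac{h(\lambda_{T,t},\lambda_i)-h(\lambda_{T,t},\lambda_j)}{\lambda_i-\lambda_j}\big|\le 2C$ uniformly in $t$. Writing $\bbV^\herm\bbE\bbV=\bbV^\herm\bbU\bbM\bbU^\herm\bbV$ and measuring how far $\bbU$ is from $\bbV$ is what produces the misalignment constant $\delta=(\|\bbU-\bbV\|^2+1)^2-1$: when $\bbU=\bbV$ only the diagonal survives and yields $2C\eps$, while the off-diagonal mass is controlled by $\delta$, by the spatial dimension through a Cauchy--Schwarz $\sqrt{N}$, and by the $T$ temporal blocks, giving the composite factor $\delta T\sqrt{N}$. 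Collecting the two contributions produces $\|\bbH(\hat\bbS,\bbS_T)-\bbH(\bbS,\bbS_T)\|\le 2C(1+\delta T\sqrt{N})\eps=\Delta\eps$ to first order.

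With the per-filter bound in hand, the second stage is a standard layerwise induction. Writing the recursion \eqref{eq.eq.filter_bank} for the nominal and perturbed graphs and subtracting, I would add and subtract $\bbH_\ell^{fg}(\bbS,\bbS_T)\hat\bbx_{\diamond,\ell-1}^g$ inside each summand and use the $1$-Lipschitz property of $\sigma$ to pass the norm through the nonlinearity. The first group of terms is controlled by the normalized response $\|\bbH_\ell^{fg}(\bbS,\bbS_T)\|\le1$ (Def.~\ref{def:norm}), carrying the layer-$(\ell-1)$ error forward without amplification; the second group is controlled by the filter-stability constant $\Delta\eps$ times the norm of the perturbed features, which is itself at most $F^{\ell-1}\|\bbx_\diamond\|_2$ because normalized filters and $1$-Lipschitz nonlinearities do not increase norms and each of the $F$ channels contributes additively. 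Summing the $F$ channels per layer and unrolling the recursion over the $L$ layers converts the per-layer error $\Delta\eps$ into the stated $LF^{L-1}\Delta\eps\|\bbx_\diamond\|_2$.

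The main obstacle is the off-diagonal eigenvector-misalignment term in the filter-stability step: the integral-Lipschitz property gives a clean pointwise bound on the spectral coefficients, but converting the resulting matrix — whose entries carry both the $\bbV^\herm\bbE\bbV$ factors and the bounded spectral ratios — into an operator-norm bound requires carefully tracking how the spatial misalignment $\|\bbU-\bbV\|$ interacts with the $T$ temporal copies introduced by the Kronecker structure, and this is precisely where the extra $T$ (beyond the single-graph $\sqrt{N}$) enters. A secondary technical point is verifying that the $\ccalO(\eps^2)$ remainder from the power expansion is genuinely subdominant, so that the first-order constant $\Delta$ governs the bound; I would handle this by applying the same spectral estimates to the higher-order commutator terms.
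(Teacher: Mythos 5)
Your overall strategy is the one the paper itself follows (the relative-perturbation template of the GCNN stability analysis in the cited reference, applied per graph-time filter and then propagated through $L$ layers and $F$ features using Definitions~\ref{def:intlip} and~\ref{def:norm} and the $1$-Lipschitz nonlinearity), and the pieces you do carry out are sound: the first-order expansion $\hat{\bbS}^k-\bbS^k=\sum_{r=0}^{k-1}\bbS^r(\bbE\bbS+\bbS\bbE)\bbS^{k-1-r}+\ccalO(\eps^2)$, the telescoping of the spectral coefficients to $(\lambda_i+\lambda_j)\bigl(h(\lambda_{T,t},\lambda_i)-h(\lambda_{T,t},\lambda_j)\bigr)/(\lambda_i-\lambda_j)$, the $2C$ bounds on the diagonal and off-diagonal parts via~\eqref{eq:lipmain}--\eqref{eq:lipsch}, and the $LF^{L-1}$ layerwise unrolling.

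The genuine gap is the step you yourself flag as ``the main obstacle'': the conversion of the entrywise spectral bounds into an operator-norm bound on $\bbH(\bbS_T,\hat{\bbS})-\bbH(\bbS_T,\bbS)$ that actually yields $\Delta=2C(1+\delta T\sqrt{N})$. You assert that the misalignment contributes ``$\delta$, a Cauchy--Schwarz $\sqrt{N}$, and the $T$ temporal blocks,'' but this is not a derivation, and your own structural setup works against it: after conjugating by the unitary $\bbV_T\otimes\bbI_N$ the difference operator is block-diagonal across temporal frequencies, with the $t$-th block a purely spatial filter difference whose frequency response $h(\lambda_{T,t},\cdot)$ is integral Lipschitz with the same constant $C$ uniformly in $t$; the spectral norm of a block-diagonal matrix is the \emph{maximum} of the block norms, which would give $2C(1+\delta\sqrt{N})$ with no factor of $T$ at all. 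So the sketch as written either proves a different (tighter) constant or, more likely, cannot recover the stated one, because the place where $T$ enters the paper's bound lies precisely in the norm bookkeeping you have deferred (e.g., bounding a sum over the $NT$ product-graph spectral indices by the sum of $T$ per-block contributions rather than their maximum, together with the $\bbU$-versus-$\bbV$ decomposition of $\bbE$ that produces $\delta=(\|\bbU-\bbV\|^2+1)^2-1$). Until that matrix-norm argument is written out explicitly — including how $\bbV^\herm\bbU\bbM\bbU^\herm\bbV$ is split into an aligned diagonal part and a misaligned remainder, and how the remainder's operator norm accumulates the factors $\delta$, $\sqrt{N}$, and $T$ — the proof of the filter-stability lemma, and hence of~\eqref{stabilitybound}, is incomplete. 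The same remark applies to the $\ccalO(\eps^2)$ remainder: the theorem states a bound with no second-order term, so you must either show the remainder is absorbed into $\Delta\eps$ or make explicit that the bound holds to first order, rather than postponing it to an unspecified ``separate routine bound.''
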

\begin{proof}
	See Appendix~\ref{sec:stability_proof}.
\end{proof}


Result~\eqref{stabilitybound} generalizes the findings in \cite{gamamain} to the spatiotemporal domain and states that GTCNNs are stable to relative perturbations in the spatial graph.
Together with the permutaiton equivariance~[Prop.~2], results~\eqref{stabilitybound} shows that GTCNNs allow for inductive learning and that are transferable architectures.
Such result provides three main insight on its stability/transferability:
\begin{enumerate}
\item The GTCNN is less stable for larger graphs ($\sqrt{N}$) as more nodes are exchanging information over a perturbed graph. Instead, its stability is more affected by the temporal resolution as increasing $T$ implies replicating the entire perturbed spatial graph.
\item The GTCNN is less stable if it is more discriminative in the nominal graph. This is reflected by term  $LF^{L-1}$ and it is due to the larger number of filters operating on the perturbed graphs. Such a observation shows also an inherit trade-off between stability and discriminability of GTCNNs.
\item Finally, we see the impact of each individual filter via the Lipschitz constant $C$. The latter in turn controls the filter discriminability on high graph frequencies to improve stability.
\end{enumerate}
%
%
%


\smallskip
\noindent\textbf{Comparison with alternative bounds.} We discuss now that exploring the product graph sparsity with GTCNN provides more stable solutions compared with baselines that ignore it.
We discuss also the relation of result \eqref{stabilitybound} with the stability result for spatiotemporal learning~\cite{mo:ruiz2020gatedGRNN,hadou2021space}.


\smallskip
\emph{Product graph GCNN:} One trivial learning solution on product graphs is to ignore what different edges represent and naively deploy a GCNN over this large graph of $NT$ nodes. Using the results in
~\cite{gamamain}, such GCNN has a stability bound $\Delta_\text{GCNN} = 2C\sqrt{T}(1+\delta\sqrt{NT})$ since it assumes the perturbation over all the edges in the product graph including temporal ones, so, the noise energy scales by $\sqrt{T}$ and it applies over $NT$ nodes.
The latter leads to a stability bound $2CT$ times looser than~\eqref{stabilitybound}.
Moreover, Theorem~\ref{filterstanility} restrains the graph-time filter variability only on spatial frequencies, while if we apply directly the results of~\cite{gamamain}, we restrain also high temporal frequency variations.

\smallskip
\emph{GCNNs:}
Another way to approach spatiotemporal learning is to treat the time series as features over the nodes and deploy a conventional GCNN. Leveraging again the result of~\cite{gamamain}, such a solution will have a stability bound $\Delta_\text{GCNN} = 2CT^L(1+\delta\sqrt{N})$ as we replicate filters in each layer $T$ times for each feature, including the input layer.
This bound is magnificently large due to factor $T^L$.

\smallskip
\emph{GGRNN~\cite{mo:ruiz2020gatedGRNN}:}
Graph gated recurrent neural networks (GGRNNs) replace the linear transformations in a recurrent neural network with graph filters to learn spatiotemporal patterns. They have a stability bound $\Delta_{\text{GGRNN}} = C(1+\sqrt{N}\delta)(T^2+3T)$. 
Comparing with~\eqref{stabilitybound}, $\Delta_{\text{GGRNN}}$ expands at a higher rate by a factor of $T$, but note that the term $T^2+3T$ also implicitly contains the effect of layers $L$ in itself.
The assumptions behind the stability theorem in~\cite{mo:ruiz2020gatedGRNN} are similar to Theorem~\ref{filterstanility}, i.e., spatial integral Lipschitz filters and 1-Lipschitz nonlinearity.
In conclusion, Theorem~\ref{filterstanility} and~\cite{mo:ruiz2020gatedGRNN} present closely related stability bounds based similar conditions but for different models.

\smallskip
\emph{ST-GNN~\cite{hadou2021space}:}
Space-time graph neural network (ST-GNN) linearly composes a GSO with a continuous time shift operator to define a space-time shift operator.
The stability to relative perturbation on the spatial graph is equal to GCNN~\cite{gama2020elvinmagazine}, i.e., $\Delta_{\text{STGNN}} = 2C(1+\delta\sqrt{N}) $.
This bound suggests that ST-GNN can process temporal properties of times series robustly under spatial perturbation.
The result~\eqref{stabilitybound} reflects the effect of time series length on stability of GTCNN as it involves time directly into convolutional filters to capture spatiotemporal patterns in the data.

\section{Numerical Results}
\label{sec_numeric}
This section evaluates the GTCNN performance in different scenarios and compares it with other state-of-the-art algorithms. In all experiments, the ADAM optimizer is used to train the model and an unweighted directed line graph is selected as the temporal graph.

\subsection{Source Localization}
The task is to detect the source of a diffusion process over a graph by observing the time-series of length $T$.
The graph is an undirected stochastic block model with $C = 5$ communities and $N = 100$ nodes.
The edges are randomly and independently drawn with probability $0.8$ for nodes in a same community and $0.2$ for nodes in different communities.
At $t=0$, a random node takes a unitary value and diffuses it throughout the network 30 times as $e^{-t\bbL}$.
The model is fed with a multivariate time series $\{\bbx_{t-T},\dots,\bbx_{t-1}\}$ randomly selected after at least $t = 15$ diffusion and the goal is to detect the community corresponding to the source node.  
Our aim here is to study the role of the different product graphs; hence, we compare different GTCNNs with the baseline GCNN that ignores the temporal connections and treats time as feature.


The dataset contains $2000$ samples with an $80/10/10$ split.
All architectures have two layers with two second order filters.
The cross-entropy loss is used for all models except for the parametric which its cost is also regularized by an $l_1$-norm of product graph parameters $\{s_{ij}\}$ with regularization weight $\beta = 0.05$ to enforce sparse spatiotemporal connections.
The temporal windows are selected from $T\in\{2,3,4,5\}$ and the features from $F_1,F_2\in\{4,8,16,32\}$.
The models are trained for $1000$ epochs with $100$ batch size.
Each experiment is done for $10$ different random graphs and $10$ different dataset realization.

\begin{figure}[t]
	\centering
	\includegraphics[width=0.9\linewidth]{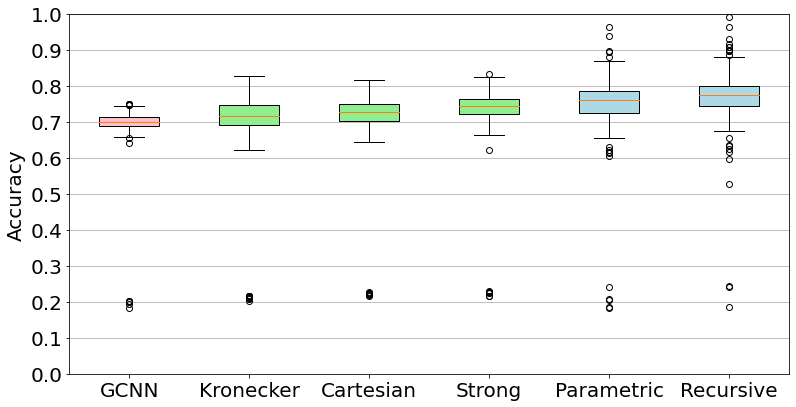}
	\caption{Comparison of GCNN with parametric and non-parametric GTCNN performances on source localization task to emphasize on the role of spatiotemporal couplings.}
	\label{fig:exp1}
\end{figure}

The results in Fig.~\ref{fig:exp1} suggest that accounting for the temporal connections even through a fixed product graph improves the performance.
Considering consecutive times as features in the GCNN can be translated as a fully connected temporal graph, so, the improved performance by enforcing a reasonable structure to temporal samples was expected.  
Better results are achieved by using parametric product graph as it learns the temporal connections for the specific task.
This flexibility reduces also the number of failed training attempts compared with the fixed product graphs and the GCNN.

\subsection{Multivariate Time-Series Forecasting}

\begin{table*}[t]
	\centering
	\caption{Performance comparison of GTCNN and other baseline models for different prediction horizons. The best performance is shown in bold and the second best is underlined.
	  The standard deviation of all models are of the order $10^{-3}$ and are omitted to avoid an overcrowded table.}
	\label{tablemain}
	\renewcommand{\arraystretch}{1.5}
	\begin{tabular}{|ll|lll|lll|lll|}
		\hline
		\multirow{2}{*}{Data}     & \multirow{2}{*}{Models} & \multicolumn{3}{c|}{3-Steps}                    & \multicolumn{3}{c|}{6-Steps}                    & \multicolumn{3}{c|}{12-Steps}                    \\ \cline{3-11} 
		&                         & MAE           & RMSE          & MAPE            & MAE           & RMSE          & MAPE            & MAE           & RMSE          & MAPE             \\ \hline
		\multirow{8}{*}{\begin{sideways}METR-LA\end{sideways}}
		& ARIMA~\cite{li2017diffusion}                   
		& 3.99          & 8.21          & 9.60\%          & 5.15          & 10.45         & 12.70\%         & 6.90          & 13.23         & 17.40\%          \\
		& G-VARMA~\cite{mo:isufi2019VARMA}                 
		& 3.60          & 6.89          & 9.62\%          & 4.05          & 7.84          & 11.22\%         & 5.12          & 9.58          & 14.00\%          \\
		& GP-VAR~\cite{mo:isufi2019VARMA}                  
		& 3.56          & 6.54          & 9.55\%          & 3.98          & 7.56          & 11.02\%         & 4.87          & 9.19          & 13.34\%          \\
		& FC-LSTM~\cite{li2017diffusion}                  & 3.44          & 6.30          & 9.60\%          & 3.77          & 7.23          & 10.90\%         & 4.37          & 8.69          & 13.20\%          \\
		& Graph Wavenet~\cite{mo:wu2019graph}           
		& \ul{2.69}     & \textbf{5.15} & \ul{6.90\%}     & \ul{3.07}     & \ul{6.22}     & \ul{8.37\%}     & \textbf{3.53} & \ul{7.37}     & \textbf{10.01\%} \\
		& STGCN~\cite{mo:yu2017spatio}                   
		& 2.88          & 5.74          & 7.62\%          & 3.47          & 7.24          & 9.57\%          & 4.59          & 9.40          & 12.70\%          \\
		& GGRNN~\cite{mo:ruiz2020gatedGRNN}                  
		& 2.73          & 5.44          & 7.12\%          & 3.31          & 6.63          & 8.97\%          & 3.88          & 8.14          & 10.59\%          \\
		& GTCNN (this work)                   & \textbf{2.68} & \ul{5.17}     & \textbf{6.85\%} & \textbf{3.02} & \textbf{6.20} & \textbf{8.30\%} & \ul{3.55}     & \textbf{7.35} & \ul{10.21\%}          \\ \hline
		\multirow{8}{*}{\begin{sideways}PEMS-BAY\end{sideways}}
		& ARIMA~\cite{li2017diffusion}                  
		& 1.62          & 3.30          & 3.50\%          & 2.33          & 4.76          & 5.40\%          & 3.38          & 6.50          & 8.30\%           \\
		& G-VARMA~\cite{mo:isufi2019VARMA}                 
		& 1.88          & 3.96          & 4.28\%          & 2.45          & 4.70          & 5.42\%          & 3.01          & 5.83          & 7.10\%           \\
		& GP-VAR~\cite{mo:isufi2019VARMA}                  
		& 1.74          & 3.22          & 3.45\%          & 2.16          & 4.41          & 5.15\%          & 2.48          & 5.04          & 6.18\%           \\
		& FC-LSTM~\cite{li2017diffusion}                   & 2.05          & 4.19          & 4.80\%          & 2.20          & 4.55          & 5.20\%          & 2.37          & 4.74          & 5.70\%           \\
		& Graph Wavenet~\cite{mo:wu2019graph}           
		& \ul{1.30}     & \ul{2.74}     & \ul{2.73\%}     & \textbf{1.63} & \ul{3.70}     & \textbf{3.67\%} & \textbf{1.95} & \textbf{4.52} & \textbf{4.63\%}  \\
		& STGCN~\cite{mo:yu2017spatio}                   
		& 1.36          & 2.96          & 2.90\%          & 1.81          & 4.27          & 4.17\%          & 2.49          & 5.69          & 5.79\%           \\
		& GGRNN~\cite{mo:ruiz2020gatedGRNN}                   
		& 1.33          & 2.81          & 2.83\%          & 1.68          & 3.94          & \ul{3.79\%}     & 2.34          & 5.14          & 5.21\%           \\
		& GTCNN (this work)                   & \textbf{1.25} & \textbf{2.66} & \textbf{2.61\%} & \ul{1.65}     & \textbf{3.68} & 3.82\%          & \ul{2.27}     & \ul{4.99}     & \ul{5.11\%}      \\ \hline
	\end{tabular}
	\renewcommand{\arraystretch}{1}	
\end{table*}

We applied the GTCNN to address traffic and weather forecasting on four benchmark datasets.
On the traffic forecasting task, we used parametric product graph while the recursive model is applied for the weather forecasting problem.
The goal is to show that both solutions compare well with alternatives.
The results are compared with baseline methods to provide insights into the GTCNN capabilities and limitations. 
For baselines, we considered:
\begin{itemize}
	\item \textit{ARIMA}:
	auto-regressive integrated moving average model using Kalman filter~\cite{li2017diffusion}. This model treats each time series individually.
	\item \textit{G-VARMA}:
	graph vector auto-regressive moving average model~\cite{mo:isufi2019VARMA}. This model works upon statistical assumptions on the data and takes the spatial graph into account.
	\item \textit{GP-VAR}:
	graph polynomial auto-regressive model~\cite{mo:isufi2019VARMA}. It has fewer parameters than G-VARMA yet considers the spatial graph.
	\item \textit{FC-LSTM}~\cite{li2017diffusion}:
	fully connected LSTM performing independently on time series, i.e., one LSTM per time series.
	\item \textit{Graph WaveNet}:
	A hybrid convolutional model for time-series over graph~\cite{mo:wu2019graph}.
	\item \textit{STGCN}:
	spatial-temporal graph convolution network which uses graph convolution module alongside with 1D convolution~\cite{mo:yu2017spatio}.
	\item \textit{GGRNN}:
	gated graph recurrent neural network which replace linear transforms in a RNN by graph convolutional filters~\cite{mo:ruiz2020gatedGRNN}.
\end{itemize}

\medskip \noindent
\textbf{Traffic forecasting.}
We considered the traffic network datasets METR-LA and PEMS-BAY.
METR-LA contains four months of recorded traffic data over 207 nodes on the highways of Los Angeles County with 5 minutes resolution~\cite{li2017diffusion}.
PEMS-BAY includes six months of traffic information over 325 nodes in Bay Area with similar resolution of METR-LA.
We considered the same setting as in~\cite{shuman2013emerging}.
The shift operator is a directed adjacency matrix constructed by applying a Gaussian threshold kernel over the road network distance matrix.
The goal is to predict the traffic load in time horizons $15-30-60$ minutes having the time series for last 30 minutes, i.e., $T = 6$.

Both datasets are divided into an $80/10/10$ split chronologically.
The GTCNN is fixed and contains two layers with third order filters and the parametric product graph.
We evaluated the number of features in each layer from $F \in \{4,8,16\}$.
The objective function is the regularized mean squared error (MSE) via the $l_1$-norm on the product graph parameters $\bbs$, i.e., $\ccalL = \text{MSE}(\hat{\bbx}_{t+1},\bbx_{t+1}) + \beta \|\bbs \|_1$.
The regularization weight is chosen from $\beta \in \{0,0.05,0.1\}$.
For the GGRNN, we evaluated features $F \in \{4,8,16\}$ and filter orders $K \in \{3,4,5\}$.
For the other models, the parameters have been set similar to~\cite{mo:wu2019graph}.
The evaluation metrics are the mean absolute error (MAE), the root mean squared error (RMSE), and the mean absolute percentage error (MAPE).

Table~\ref{tablemain} compares the performance of GTCNN and other baseline models.
The GTCNN outperforms the other models in a short horizon while Graph WaveNet works better for longer horizons.
The benefits in the short term are due to high order spatiotemporal aggregation in the GTCNN which allows capturing efficiently the spatiotemporal patterns in the data.
On the longer term, the Graph WaveNet works better because it captures longer term patterns by increasing the receptive field of the model through dilated convolutions.
Graph Wavenet can also be fed with a longer time series as it works only with the spatial graph. 
Differently, the GTCNN does not capture them and an efficient implementation remains an interesting extension.


\medskip \noindent
\textbf{Weather forecasting.}
We considered two benchmark datasets, Molene and NOAA.
The Molene dataset contains 744 hourly temperature measurement across 32 stations in a region of France.
The NOAA dataset contains 8579 hourly temperature measurement across 109 stations in the U.S..
The same setting as~\cite{mo:isufi2019VARMA} is used in this experiment.
Our aim is to make a prediction of the temperature for $1-3-5$ hours ahead having the time series for last 10 hours.

The model is fixed and consists of two recursive GTCNN [c.f.~\eqref{eq:graphtimegeneralfilter}] layers with temporal locality selected from $\tilde{K} \in\{3,4,5\}$ and spatial locality from $\bar{K} \in \{3,5,7\}$.
For the neural network models, the number of features is chosen via grid search from $F\in\{4,8,16\}$.
The GGRNN filter orders are varied among $K \in \{3,4,5\}$.
For the statistical models we set the parameters similar to~\cite{mo:isufi2019VARMA}.
The number of LSTM hidden units are selected from $\{8,16,32,64\}$.
The loss function is the MSE at 1-step prediction.

Table~\ref{tableweather} indicates the model performance for different prediction horizons.
In the Molene dataset, graph-based statistical methods outperform the rest as the dataset contains fewer samples and the time series have clear patterns in their temporal variation.
Hence, leveraging a statistical assumption for the data compensates for the lack of samples and leads to a better performance.
Among these methods, G-VARMA performs the best which indicates the importance of inducing the graph in the model.
Due to the temporal connections, the GTCNN still performs better than the neural network counterparts.
In the NOAA dataset, the abundance of training data allows the neural network models to learn complicated patterns and outperform statistical-based models. All the neural network alternatives perform closely, however, in higher forecasting horizons LSTM starts to take over the other variants while GTCNN performs better in short horizons.
Overall, the GTCNN can be considered as a valid alternative for learning spatiotemporal representations in both cases where the training set is limited or large.

\begin{table*}[]
	\centering
	\caption{The performance of the GTCNN compared with baseline methods on the weather forecasting task. The best performance is shown in bold and the second best is underlined.
	The standard deviation of all models are of the order $10^{-2}$ for the Molene dataset and $10^{-3}$ for the NOAA dataset and they are omitted to avoid an overcrowded table.}
	\label{tableweather}
	\renewcommand{\arraystretch}{1.5}
	
\begin{tabular}{|ll|lll|lll|lll|}
	\hline
	\multirow{2}{*}{Data}     & \multirow{2}{*}{Models} & \multicolumn{3}{c|}{1-Steps}                    & \multicolumn{3}{c|}{3-Steps}                    & \multicolumn{3}{c|}{5-Steps}                    \\ \cline{3-11} 
	&                         & MAE           & RMSE          & MAPE            & MAE           & RMSE          & MAPE            & MAE           & RMSE          & MAPE             \\ \hline
	\multirow{8}{*}{\begin{sideways}Moelene\end{sideways}}
	& ARIMA~\cite{li2017diffusion}                   
	& 2.26          & 4.74         & 5.24\%          & 3.56          & 7.95         & 12.37\%         & 6.15          & 13.59         & 17.51\%          \\
	& G-VARMA~\cite{mo:isufi2019VARMA}                 
	& \textbf{2.09}         & \textbf{4.40}          & \textbf{4.97\%}          & \textbf{3.45}          & \textbf{7.69}         & \textbf{12.07\% }        & \textbf{6.00 }         & \textbf{13.26 }         & \textbf{17.42\%}          \\
	& GP-VAR~\cite{mo:isufi2019VARMA}                  
	& \ul{2.13}         & \ul{4.47}          & \ul{5.01\% }         & \ul{3.52}          & \ul{7.85}          & \ul{12.10\%}         & \ul{6.05}          & \ul{13.38}          & \ul{17.46\% }         \\
	& LSTM
	& 3.84          & 8.06          & 12.44\%          & 4.72          & 10.52          & 15.54\%         & 7.84         & 17.33          & 21.09\%          \\
	& Graph Wavenet~\cite{mo:wu2019graph}           
	& 3.52     & 7.41 & 11.39\%     & 4.49     & 10.02     & 15.11\%     & 7.54 & 16.66     & 20.14\% \\
	& STGCN~\cite{mo:yu2017spatio}                   
	& 3.78          & 7.95          & 15.20\%          & 4.56          & 10.18          & 9.57\%          & 7.50          & 16.59         & 20.12\%          \\
	& GGRNN~\cite{mo:ruiz2020gatedGRNN}                  
	& 3.17         & 6.67          & 15.20\%          & 4.55          & 10.16          & 8.97\%          & 7.35          & 16.26          & 19.89\%          \\
	& GTCNN (this work)
	& 3.55 & 7.47     & 11.45\% & 4.43 & 9.87 & 14.98\% & 6.54     & 14.46 & 18.53\%          \\ \hline
	\multirow{8}{*}{\begin{sideways}NOAA\end{sideways}}
	& ARIMA~\cite{li2017diffusion}                  
	& 0.39          & 0.98         & 1.68\%          & 1.60          & 3.86         & 4.21\%          & 2.89         & 6.75          & 8.47\%           \\
	& G-VARMA~\cite{mo:isufi2019VARMA}                 
	& 0.36         & 0.89         & 1.67\%          & 1.39         & 3.35          & 3.56\%          & 2.59         & 5.48          & 6.92\%           \\
	& GP-VAR~\cite{mo:isufi2019VARMA}                  
	& 0.41       & 1.02         & 1.70\%          & 1.40          & 3.37          & 3.56\%          & 2.59          & 6.04          & 7.31\%           \\
	& LSTM                    & 0.29          & 0.75          & 1.58\%          & \ul{0.66 }         & \ul{1.59}          & \ul{2.31\%}          & \textbf{1.39 }         & \textbf{3.24 }         & \textbf{3.46\% }          \\
	& Graph Wavenet~\cite{mo:wu2019graph}           
	& 0.33     & 0.83     & 1.65\%    & 0.95 & 2.28     & 2.84\%  &\ul{ 1.42} & \ul{3.31} & \ul{3.51\% } \\
	& STGCN~\cite{mo:yu2017spatio}                   
	& 0.35          & 0.87          & 1.65\%          & 0.82          & 1.97          & 2.44\%          & 1.68         & 3.91          & 3.92\%           \\
	& GGRNN~\cite{mo:ruiz2020gatedGRNN}                   
	& \textbf{0.26}          & \textbf{0.64}          & \textbf{1.54\%}          & 0.84          & 2.02          & 2.47\%     & 1.63          & 3.80         & 3.89\%           \\
	& GTCNN (this work)                   &\ul{ 0.28 }& \ul{0.70} & \ul{1.57\%} &\textbf{ 0.63}     & \textbf{1.53 } & \textbf{2.28\% }         & 1.53     & 3.57     & 3.65\%      \\ \hline
\end{tabular}

	\renewcommand{\arraystretch}{1}
\end{table*}

\subsection{Stability Analysis}

\begin{figure*}
	\centering
	\begin{subfigure}[b]{0.22\textwidth}
		\centering
		\includegraphics[width=\textwidth]{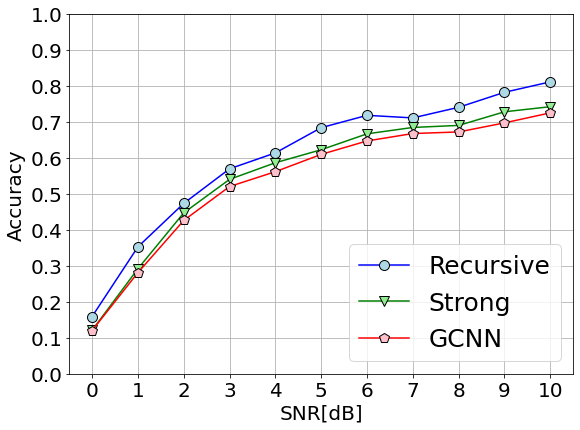}
		\caption{}
		\label{fig:exp2}
	\end{subfigure}
	\hfill	
	\begin{subfigure}[b]{0.22\textwidth}
		\centering
		\includegraphics[width=\textwidth]{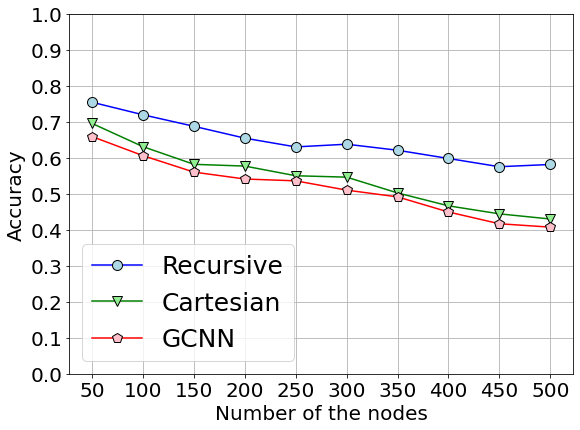}
		\caption{}
		\label{fig:exp3}
	\end{subfigure}
	\hfill
	\begin{subfigure}[b]{0.22\textwidth}
		\centering
		\includegraphics[width=\textwidth]{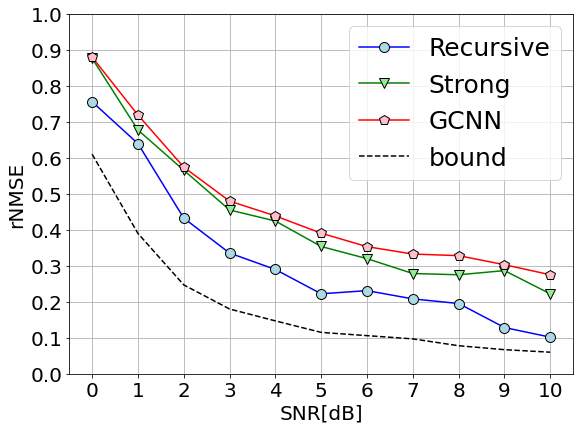}
		\caption{}
		\label{fig:exp4}
	\end{subfigure}
	\hfill
	\begin{subfigure}[b]{0.22\textwidth}
		\centering
		\includegraphics[width=\textwidth]{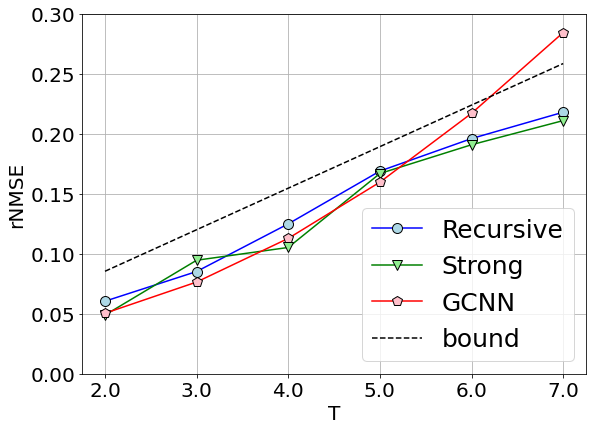}
		\caption{}
		\label{fig:exp5}
	\end{subfigure}\hfill
	\caption{
			 Stability results for different scenarios of the parametric GTCNN and the best alternative for a fixed product graph (Kronecker, Cartesian, Strong). We also consider the vanilla GCNN as a baseline.
			 (a) Results for different SNRs.
			 (b) Performance for different graph sizes in 5dB perturbation.
			 (c) Embedding difference in terms of rNMSE between the trained GTCNN on the nominal graph and the perturbed one.
			 (d) Embedding difference for different time series lengths in 5dB perturbation.
			}
	\label{fig:staball}
\end{figure*}

To investigate the stability of the GTCNN, we tested the trained models in the source localization experiment under perturbed graphs with different signal to noise ratios (SNR)
\begin{equation}
	\textnormal{SNR} = 10\log_{10} \frac{\|\bbS\|^2_F}{2\|\bbE\|_F^2}.
\end{equation}
The noise energy is doubled as it appears twice in the relative perturbation model [c.f.~\eqref{relperturb}].

Fig.~\ref{fig:exp2} shows the average classification accuracy for different amounts of perturbations.
The GTCNN performs decently even in noisy scenarios around $5dB$.
Fig.~\ref{fig:exp3} represents the performance for different number of nodes in the graph.
We trained the GTCNN over different graph sizes and evaluated the results using a perturbed graph to observe the effect of graph size on stability.
The accuracy reduces steadily with the graph size as Theorem~\ref{filterstanility} suggests.

To investigate GTCNN stability itself and its relation with bound~\eqref{stabilitybound}, we analyzed the output embeddings of a fixed GTCNN network between the nominal and the perturbed graph.
Fig.~\ref{fig:exp4} and~\ref{fig:exp5} depicts theoretical bound versus GTCNN empirical performance.
We can observe that the bound reflects the same behavior of the empirical results with respect to both noise energy and time series length.
In addition, from all results we see the GTCNN offers a more stable performance compared with the vanilla GCNN by using the temporal data as features. This in turn highlights our theoretical insights after Theorem~\ref{filterstanility}.

Finally, in Fig.~\ref{fig:ffrfr} we investigate the frequency response of the learned filters.	
The frequency responses vary smoothly over high graph frequencies for the trained filters while the temporal frequencies have more variations throughout of the spectrum.

\begin{figure*}
		\centering
	\begin{subfigure}[b]{0.45\textwidth}
		\centering
		\includegraphics[width=\textwidth]{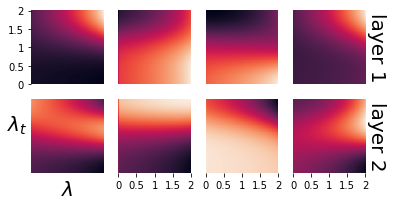}
		\caption{}
		\label{fig:ffr}
	\end{subfigure}
	\hfill	
	\begin{subfigure}[b]{0.45\textwidth}
		\centering
		\includegraphics[width=\textwidth]{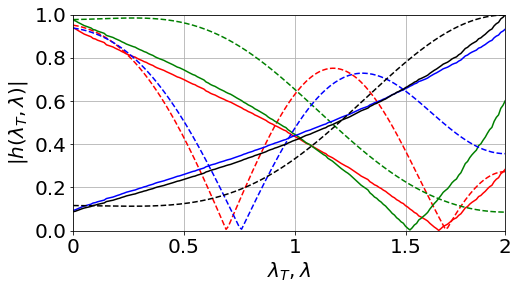}
		\caption{}
		\label{fig:ffr2}
	\end{subfigure}
	\hfill
	\caption{ (a)~Normalized frequency response of a trained recursive GTCNN with two layers and four filters per layer. The bright color represents 1 while the dark color stands for 0.
	(b)~Examples of the filter frequency response variation to normalized frequencies.
	continuous line belongs to graph frequencies $\lambda$ and dashed ones are related to temporal frequencies $\lambda_T$. }	
\label{fig:ffrfr}
\end{figure*}

\section{Conclusion}
\vfill
\label{sec_conc}
We introduced graph-time convolutional neural networks as a model to learn from spatiotemporal data.
The GTCNN uses product graph to convert dynamic data over network into static data over a larger graph.
Afterward, a shift-and-sum convolution mechanism conveys the information over the product graph to exploit spatiotemporal dependencies in the data.
The product graph itself can also be parametric to enable the model to learn temporal relationships directly from data, and also allows us to implement the GTCNN recursively and avoid storing and processing large graphs.
We proposed a spectral domain analysis for graph-time convolutional filters and showed they operate as point-wise multiplication between the filters frequency response and the time series graph-time Fourier transform. Such a spectral analysis allowed us to study the stability of the GTCNN to perturbations in the spatial support.
We showed that the GTCNN becomes linearly less stable as the time series length increases.
However, it is yet more stable than the case where we consider time as distinct features and approach the problem via GCNNs.
The numerical results also approved that GTCNNs performs better than a baseline GCNN model.
Moreover, GTCNN compares well on benchmark datasets to state-of-the-art graph-based models but it suffered to capture long term patterns in the data.
This is because the GTCNN works with large graphs and cannot maintain long input time series.
The presented recursive model overcomes the memory consumption problem for large graphs and long time series while computational complexity needs to be improved yet as a direction for future works. 

\vfill


\bibliographystyle{IEEEtran}
\bibliography{myIEEEabrv,GTCNN_lib}
%
%
%

\newpage

\appendices
\section{Proof of Proposition~\ref{prop1}}
\label{sec:gtfilter_proof}
The graph time convolutional filter in~\eqref{eq:graphtimefilter} operating over the parametric product graph is
\begin{equation}
	\bby_\diamond = \sum_{k=0}^K h_k \left(\sum_{i = 0}^1 \sum_{j = 0}^1 s_{ij}  \left(\mathbf{S}_{T}^{i} \otimes  \mathbf{S}^{j}  \right)\right)^k \bbx_\diamond.
	\label{eq:paramgtfilter}
\end{equation}
To expand equation~\eqref{eq:paramgtfilter}, we first investigate the powers of the parametric product graph
\begin{equation}
	\bbS_\diamond^k =
	(s_{00}\bbI_T\otimes\bbI_N+
	s_{01}\bbI_T\otimes\bbS+
	s_{10}\bbS_T\otimes\bbI_N+
	s_{11}\bbS_T\otimes\bbS_N)^k.
	\label{eq:powers} 
\end{equation}
Equation~\eqref{eq:powers} is expandable by multinomial theorem. The expanded version after applying mixed-product property of Kronecker product is
\begin{align}
	&\sum_{\mathclap{\sum k_{ij} = k}}c_\ccalI(\bbI_T\otimes\bbI_N)^{k_{00}}
	(\bbI_T\otimes\bbS)^{k_{01}}
	(\bbS_T\otimes\bbI_N)^{k_{10}}
	(\bbS_T\otimes\bbS_N)^{k_{11}}
	\nonumber \\
	&= \sum_{\mathclap{\sum k_{ij} = k}}c_\ccalI(\bbS_T^{k_{10}+k_{11}}\otimes\bbS^{k_{01}+k_{11}})
	\label{eq:powersexpanded}
\end{align}
where $\ccalI = \{k_{00},k_{01},k_{10},k_{11}\}$ is the multinomial term index and $c_\ccalI$ is the respective term coefficient.

Substituting the parametric graph powers~\eqref{eq:powersexpanded} into the parametric graph time convolutional filter~\eqref{eq:paramgtfilter} leads to
\begin{equation}
	\bby_\diamond = \sum_{k=0}^K h_k
	\sum_{k_{00}+k_{01}+k_{10}+k_{11} = k}c_\ccalI(\bbS_T^{k_{10}+k_{11}}\otimes\bbS^{k_{01}+k_{11}})
	\bbx_\diamond.
	\label{eq:doublesum}
\end{equation}
By a change of variables $l=k_{10}+k_{11}$ and $p=k_{01}+k_{11}$, we reduce~\eqref{eq:doublesum} into
\begin{equation}
	\bby_\diamond = \sum_{k=0}^K h_k
	\sum_{l+p = k}c_{lp}(\bbS_T^{l}\otimes\bbS^{p})
	\bbx_\diamond.
	\label{tmp10}
\end{equation}
By further changing the indexing and order of summation we can write~\eqref{tmp10} as
\begin{equation}
	\bby_\diamond = \sum_{l=0}^{K}\sum_{p=0}^{K-l} h_{lp}c_{lp}(\bbS_T^{l}\otimes\bbS^{p})
	\bbx_\diamond.
	\label{tmp11}
\end{equation}
Hence, the graph time convolutional filter in~\eqref{eq:paramgtfilter} is equivalent to~\eqref{tmp11}, which in turn is a specific case of~\eqref{eq:graphtimegeneralfilter} with $\bar{K} = K - l$, $\tilde{K} = K$, and $h_{kl} = h_{lp}c_{lp}$.


\section{Proof of Proposition~\ref{prop2}}
\label{sec:permequi_proof}
Let $\ccalG$ and its permuted version $\ccalG^\prime$ be represented by the GSOs $\bbS$ and $\bbS^\prime = \bbP^\top\bbS\bbP$, respectively, and let $\bbX^\prime = \bbP^\top\bbX$ be the spatial-graph permuted version of $\bbX$.
Considering the graph-time convolutional filter in~\eqref{eq:graphtimegeneralfilter}, the output for permuted signal and spatial graph is
\begin{equation}
	\bby_\diamond^\prime=
	\sum_{k=0}^{\bar{K}}
	\sum_{l=0}^{\tilde{K}}
	h_{kl} (\bbS_T^l \otimes (\bbP^\top\bbS\bbP)^k)\text{vec}(\bbP^\top\bbX)
	\label{eq:permeq_1}
\end{equation}
Invoking properties of permutation matrix $\bbP^k = \bbP$ and $\bbP^\top\bbP = \bbI_N$, the permuted spatial shift can be written as $(\bbP^\top\bbS\bbP)^k = \bbP^\top\bbS^k\bbP $.
Applying the mixed-product property for Kronecker product, we can rewrite~\eqref{eq:permeq_1} as
\begin{align}
	\bby_\diamond^\prime &=
	\sum_{k=0}^{\bar{K}}
	\sum_{l=0}^{\tilde{K}}
	h_{kl} (\bbI_T\bbS_T^l\bbI_T \otimes \bbP^\top\bbS^k\bbP)\text{vec}(\bbP^\top\bbX) \nonumber \\
	&= (\bbI_T\otimes\bbP^\top)\sum_{k=0}^{\bar{K}}
	\sum_{l=0}^{\tilde{K}}h_{kl} (\bbS_T^l \otimes \bbS^k)
	(\bbI_T\otimes\bbP)\text{vec}(\bbP^\top\bbX) \nonumber \\
	&=(\bbI_T\otimes\bbP^\top)\bbH(\bbS_T,\bbS)(\bbI_T\otimes\bbP)\text{vec}(\bbP^\top\bbX).
	\label{eq:permeq_2}
\end{align}
Now, we employ vector operator property $\text{vec}(\bbA\bbX\bbB)= (\bbB^\top \otimes \bbA)\text{vec}(\bbX)$ followed by the fact that permutation matrix is unitary $\bbP^\top\bbP = \bbI_N$ to write
\begin{equation}
	(\bbI_T\otimes\bbP)\text{vec}(\bbP^\top\bbX) = \text{vec}(\bbP\bbP^\top\bbX)
	=\text{vec}(\bbX).
	\label{tmp22}
\end{equation}
By replacing~\eqref{tmp22} into~\eqref{eq:permeq_2}, we have
\begin{align}
	\bby_\diamond^\prime
	&= (\bbI_T\otimes\bbP^\top)\bbH(\bbS_T,\bbS)\text{vec}(\bbX) \nonumber \\ 
	&=
	(\bbI_T\otimes\bbP^\top)\text{vec}(\bbY).
	\label{tmp23}
\end{align}
Expression~\eqref{tmp23} alongside with the vector operator property leads to
\begin{equation}
	\bby_\diamond^\prime=
	\text{vec}(\bbP^\top\bbY)
	\label{eq:permeq_3}
\end{equation}
Hence, $\bbY^\prime = \bbP^\top\bbY$ and the output is permuted similarly as the input.
Combining the results from~\eqref{eq:permeq_1} and~\eqref{eq:permeq_3} shows that the graph-time convolutional filter in~\eqref{eq:graphtimegeneralfilter} is permutation equivariant, i.e.,
\begin{equation}
		\text{vec}(\bbP^\top\text{vec}^{-1}(\bbH(\bbS_T,\bbS)\bbx_{\diamond})) = \bbH(\bbS_T,\bbP^\top\bbS\bbP)\text{vec}(\bbP^\top\bbX).
\end{equation}

Since GTCNNs are a composition of graph-time convolutional networks and point-wise nonlinearities, they also benefit from permutation equivariancy.
%

\section{Proof of Proposition~\ref{prop3}}	
\label{sec:spectral}
Considering the graph-time filter in~\eqref{eq:graphtimegeneralfilter},  the input-output relation is
\begin{equation}
	\bby_{\diamond} =
	\sum_{k=0}^{\bar{K}}
	\sum_{l=0}^{\tilde{K}}
	h_{kl} (\bbS_T^l \otimes \bbS^k)
	\bbx_{\diamond}.
	\label{eq:input-output} 
\end{equation}
We apply graph-time Fourier transform by multiplying both sides with
$(\bbV_T\otimes\bbV)^H$
from left
\begin{equation}
	\tilde{\bby}_\diamond =
	\sum_{k=0}^{\bar{K}}
	\sum_{l=0}^{\tilde{K}}
	h_{kl}
	(\bbV_T\otimes\bbV)^H
	(\bbS_T^l \otimes \bbS^k)
	\bbx_{\diamond},	
\end{equation}
where
$\tilde{\bby}_\diamond$
is the graph-time Fourier transform of output $\bby_{\diamond}$.
Leveraging the mixed-product property of Kronecker product leads to
\begin{equation}
	\tilde{\bby}_\diamond =
	\sum_{k=0}^{\bar{K}}
	\sum_{l=0}^{\tilde{K}}
	h_{kl}
	(\bbV_T^H\bbS_T^l \otimes \bbV^H\bbS^k)
	\bbx_{\diamond}.
\end{equation}
Using eigenvalue decomposition we have
\begin{equation}
	\tilde{\bby}_\diamond =
	\sum_{k=0}^{\bar{K}}
	\sum_{l=0}^{\tilde{K}}
	h_{kl}
	(\bbLambda_T^l\bbV_T^H \otimes \bbLambda^k\bbV^H)
	\bbx_{\diamond}.
\end{equation}
Then, the Kronecker product property can be applied again
\begin{equation}
	\tilde{\bby}_\diamond =
	\sum_{k=0}^{\bar{K}}
	\sum_{l=0}^{\tilde{K}}
	h_{kl}
	(\bbLambda_T^l \otimes \bbLambda^k)
	(\bbV_T\otimes\bbV)^H
	\bbx_{\diamond}.
\end{equation}
The term
$(\bbV_T\otimes\bbV)^H\bbx_{\diamond}$ shows the graph-time Fourier transform of input $\bbx_{\diamond}$.
Hence we can complete the proof as
\begin{equation}
	\tilde{\bby}_\diamond = 
	h(\bbLambda_T,\bbLambda)
	\tilde{\bbx}_\diamond,
\end{equation}
where
\begin{equation}
	h(\bbLambda_T,\bbLambda) =
	\sum_{k=0}^{\bar{K}}
	\sum_{l=0}^{\tilde{K}}
	h_{kl}
	(\bbLambda_T^l \otimes \bbLambda^k)	.
\end{equation}

\section{Proof of Theorem~\ref{filterstanility}}
\label{sec:stability_proof}
The proof follows closely as that for the GCNNs in~\cite{gamamain}, but the product graph requires a few changes. We will also use the following lemma. 
\begin{lemma}[c.f.~\cite{gamamain}]
	Let $\bbS=\bbV \bbLam \bbV^\herm$ and $\bbE = \bbU \bbM \bbU^\herm$ such that $\|\bbE\| \leq \eps$.
	Assume that $\bbE_V = \bbV \bbM \bbV^\herm$ is the projection of perturbation $\bbE$ over graph eigenspace, and $\bbE = \bbE_V+\bbE_U$.
	For any eigenvector $\bbv_i$ of $\bbS$ it holds that
	\begin{equation}
		\bbE \bbv_i = m_i \bbv_i + \bbE_U \bbv_i
		\end{equation}
	with $\|\bbE_U\|\leq \eps\delta$, where $\delta = (\|\bbU -\bbV\|^2+1)^2-1$ and $m_i$ is the $i$-th eigenvalue of $\bbM$.
	Recall that $\|\cdot\|$ represents the operator norm of a matrix.
	\label{mainlemma}
\end{lemma}

The proof is organized into two parts. First, we prove graph-time convolutional filter [cf.~\eqref{eq:graphtimegeneralfilter}] is stable and then we extend it to the GTCNN.

\smallskip
\noindent\textbf{Filter.}
Let $\bbx_\diamond$ be the input signal over product graph with finite energy, i.e., $\|\bbx_\diamond\|_2<\infty$.
The effect of perturbation on the output is
\begin{equation}
	\| \hat{\bby} - \bby \|_2 = \left\| [\bbH(\bbS_T,\bbS) - \bbH(\bbS_T,\hat{\bbS})]\bbx_\diamond\right\|_2.
	\label{outputdiff}
\end{equation}
The difference between convolutional filter operators can be further written as
\begin{equation}
	\bbH(\bbS_T,\hat{\bbS}) - \bbH(\bbS_T,\bbS) = \sum_{k=0}^{\bar{K}}\sum_{l=0}^{\tilde{K}} h_{kl}[\bbS_T^l \otimes (\hat{\bbS}^k-\bbS^k)].
	\label{filterdiff}
	\end{equation}
As in~\eqref{filterdiff}, we discussed the case for filters of orders infinity which allows us to link these discrete filters to their continuous spectral response.
The difference between spatial shift operators, $\bbS$ and $\hat{\bbS}$, can be expanded by first-order Taylor series approximation as
\begin{equation}
	\hat{\bbS}^k-\bbS^k = \sum_{r=0}^{k-1}(\bbS^r\bbE\bbS^{k-r}+\bbS^{r+1}\bbE\bbS^{k-r-1}) + \bbD
	\label{gsodiff}
	\end{equation}
where $\bbD = \ccalO(\eps^2)$ is negligible.

With this in place, we now move to the spectral domain. The graph-time Fourier decomposition for the signal $\bbx_\diamond$ over the product graph is
\begin{equation}
	\bbx_\diamond =  (\bbV_T \otimes \bbV)\tilde{\bbx}_{\diamond} = \sum_{t=1}^{T}\sum_{i=1}^{N} \tilde{\bbx}_{ti} (\bbv_{T,t}\otimes\bbv_i).
	\label{GFT}
\end{equation}
Substituting expansion~\eqref{GFT} of $\bbx_\diamond$ and~(\ref{gsodiff}) into~(\ref{outputdiff}), we have
\begin{align}
	\| \hat{\bby} - \bby \|&_2= \left\| \sum_{t=1}^{T}\sum_{i=1}^{N}\tilde{\bbx}_{ti}\sum_{k=0}^{\bar{K}}\sum_{l=0}^{\tilde{K}} h_{kl} \label{bigone}\right.\\
	&\left. [\bbS_T^l \otimes \sum_{r=0}^{k-1}(\bbS^r\bbE\bbS^{k-r}+\bbS^{r+1}\bbE\bbS^{k-r-1})] (\bbv_{Tt}\otimes\bbv_i) \right\|_2 \nonumber
\end{align}

Now, we expand the inner summation in (\ref{bigone}) to simplify it. By using the mixed-product property of Kronecker product
$(\bbA \otimes \bbB)(\bbC \otimes \bbD) = (\bbA\bbC)\otimes(\bbB\bbD)$, we have
\begin{align}
	 &[\bbS_T^l \otimes \sum_{r=0}^{k-1}(\bbS^r\bbE\bbS^{k-r}+\bbS^{r+1}\bbE\bbS^{k-r-1})] (\bbv_{Tt}\otimes\bbv_i) \nonumber \\
	&=\bbS_T^l\bbv_{Tt} \otimes \sum_{r=0}^{k-1}(\bbS^r\bbE\bbS^{k-r}+\bbS^{r+1}\bbE\bbS^{k-r-1})\bbv_i.
	\label{mixedproduct}
\end{align}
With the eigenvalue definition $\bbS\bbv_i = \lambda_i\bbv_i$, we can reduce (\ref{mixedproduct}) into
\begin{equation}
	\lambda_{T,t}^l\bbv_{T,t} \otimes \sum_{r=0}^{k-1}(\lambda_i^{k-r}\bbS^r+\lambda_i^{k-r-1}\bbS^{r+1})\bbE\bbv_i.
	\label{evreplaced}
\end{equation}

By applying Lemma~\ref{mainlemma} on~(\ref{evreplaced}), we have
\begin{equation}
	\lambda_{Tt}^l\bbv_{Tt} \otimes\sum_{r=0}^{k-1}2m_i\lambda_i^k\bbv_i +(\lambda_i^{k-r}\bbS^r+\lambda_i^{k-r-1}\bbS^{r+1})\bbE_U\bbv_i
	\label{lemmareplaced}
\end{equation}
where recall $m_i$ is the $i$-th eigenvector of $\bbM$ and $\bbE_U = \bbE - \bbV\bbM\bbV^\herm$.
Then, by replacing (\ref{lemmareplaced}) in (\ref{bigone}) we get
\begin{align}
	&\| \hat{\bby} - \bby \|_2= \left\|\sum_{t=1}^{T}\sum_{i=1}^{N}\tilde{\bbx}_{ti}\sum_{k=0}^{\bar{K}}\sum_{l=0}^{\tilde{K}} h_{kl} \label{newbigone} \right.\\
	&\left.\lambda_{Tt}^l\bbv_{Tt} \otimes\sum_{r=0}^{k-1}2m_i\lambda_i^k\bbv_i +(\lambda_i^{k-r}\bbS^r+\lambda_i^{k-r-1}\bbS^{r+1})\bbE_U\bbv_i\right\|_2. \nonumber
\end{align}
The first term of inner summation in~\eqref{newbigone}
\begin{equation}
	\bbt_1 = 2\sum_{t=1}^{T}\sum_{i=1}^{N}\tilde{\bbx}_{ti}m_i\sum_{k=0}^{\bar{K}}\sum_{l=0}^{\tilde{K}} k h_{kl}\lambda_{Tt}^l\lambda_i^k (\bbv_{Tt}\otimes\bbv_i)
\end{equation}
can be simplified based on partial derivation definition
\begin{equation}
	\bbt_1 = 2\sum_{t=1}^{T}\sum_{i=1}^{N}\tilde{\bbx}_{ti}m_i \lambda_i\frac{\partial h(\lambda_{Tt},\lambda_i)}{\partial \lambda_i} (\bbv_{Tt}\otimes\bbv_i).
	\end{equation}
Using inequalities~\eqref{eq:lipsch} and $\|\bbE\|\leq \eps$, we can bound the norm of this term by
\begin{equation}
	\|\bbt_1\|_2 \leq 2C\eps\|\bbx_\diamond\|_2 
	\label{1sttermbound}
\end{equation}
where $C$ is the integral Lipschitz constant of graph-time convolutional filters. 

For the second term of the inner summation in~\eqref{newbigone}, we use the mixed-product property again to write
\begin{align}
	\bbt_2 = \sum_{t=1}^{T}\sum_{i=1}^{N}\tilde{\bbx}_{ti}&(\bbI_T\otimes \bbV) 
	(\bbI_T \otimes \diag(\bbg_i)) \label{secondterm} \\
	&(\bbI_T\otimes \bbV^H)(\bbI_T\otimes \bbE_U) (\bbv_{Tt}\otimes\bbv_i), \nonumber
	\end{align}
where 
\begin{align}
	[\bbg_i]_j &= \sum_{k=0}^{\bar{K}}\sum_{l=0}^{\tilde{K}} k h_{kl}\lambda_{Tt}^l\sum_{r=0}^{k-1}\lambda_i^{k-r}\lambda_j^r+\lambda_i^{k-r-1}\lambda_j^{r+1} \label{gdef} \\
	&=\left\{
	\begin{matrix*}[l]
		&\lambda_i\frac{\partial h(\lambda_{Tt},\lambda_i)}{\partial \lambda_i} & \text{if}\quad i=j \\
		&\frac{\lambda_i+\lambda_j}{\lambda_i-\lambda_j}(h(\lambda_{Tt},\lambda_i)-h(\lambda_{Tt},\lambda_j))&\text{if}\quad i=j
		\end{matrix*}
	\right.. \nonumber
	\end{align}
From the integral Lipschitz filter (Def.~\ref{def:intlip}), each entry of $\bbg_i$ is bounded by $2C$, hence the operator norm of $\|\diag(\bbg_i)\|\leq 2C$.
Due to the Lemma~1 $\|\bbE_U\|\leq \eps\delta$, and then $\| \bbI_T \otimes \bbE_U \|$ is bounded by $\eps\delta\sqrt{T}$.
Finally, the summation over all frequency components multiplies the bound by $\sqrt{NT}$, so
\begin{equation}
	\|\bbt_2\|_2 \leq 2C\eps\delta T \sqrt{N} \|\bbx_{\diamond}\|_2.
	\label{2ndtermbound}
\end{equation}

Replacing~\eqref{1sttermbound} and~\eqref{2ndtermbound} bounds into~\eqref{newbigone} followed by triangle inequality leads to the upper bound on the filter output
\begin{equation}
	\|\hat{\bby} - \bby\|_2 \leq 2C\eps(1+\delta T \sqrt{N}) \|\bbx_\diamond\|_2 = \Delta \eps \|\bbx_\diamond\|_2
	\label{eq:filterbound}
\end{equation}
where $\Delta = 2C(1+\delta T \sqrt{N})$.

\smallskip
\noindent\textbf{GTCNN.}
We can write the output difference of a GTCNN with $L$ layers over the nominal graph $\bbx_{\diamond,L}$ and perturbed graph $\hat{\bbx}_{\diamond,L}$ as
\begin{align}
	&\|\bbx_{\diamond,L} - \hat{\bbx}_{\diamond,L}  \|_2 \leq \label{gtcnn_output_diff} \\
	&\left\|
	\sigma\left(\sum_{g=1}^F \bbH_L^{fg}\bbx_{\diamond,L-1}^g\right) - 
	\sigma\left(\sum_{g=1}^F \hat{\bbH}_L^{fg}\hat{\bbx}_{\diamond,L-1}^g\right)
	\right\|_2,
	\nonumber
\end{align}
where, for the sake of notation simplicity, $\bbH_L^{fg}$ and $\hat{\bbH}_L^{fg}$ are standing for $\bbH_L^{fg}(\bbS_T,\bbS)$ and $\bbH_L^{fg}(\bbS_T,\hat{\bbS})$, respectively.

Applying the 1-Lipschitz continuous property of activation function $|\sigma(a)-\sigma(b)|\leq |a-b|$, followed by triangular inequality leads into 
\begin{equation}
	\|\bbx_{\diamond,L} - \hat{\bbx}_{\diamond,L}  \|_2 \leq
	\sum_{g=1}^F \|\bbH_L^{fg}\bbx_{\diamond,L-1}^g - \hat{\bbH}_L^{fg}\hat{\bbx}_{\diamond,L-1}^g \|_2.
	\label{gtcnn_output_diff_simple}
\end{equation}
Adding and subtracting $\hat{\bbH}_L^{fg}\bbx_{\diamond,L-1}^g$ and leveraging again the triangular inequality we can upper bound~\eqref{gtcnn_output_diff_simple} as 
\begin{align}
	\|\bbx_{\diamond,L} - \hat{\bbx}_{\diamond,L}  \|_2 &\leq 
	\|(\bbH_L^{fg}-\hat{\bbH}_L^{fg})\bbx_{\diamond,L-1}^g\|_2 \nonumber\\
	&+\|\hat{\bbH}_L^{fg}(\bbx_{\diamond,L}^g-\hat{\bbx}_{\diamond,L-1}^g)\|_2.
	\label{gtcnn_addsub}
\end{align}
The first term is bounded by $\Delta\eps$ [cf.~\eqref{eq:filterbound}] being it related to the filter stability.
The second term is bounded by spectral response of graph-time convolutional filters $\|\hat{\bbH}_L^{fg}\|\leq 1$ based on the theorem assumption [cf. Def~\ref{def:norm}].
Replacing these bounds into~\eqref{gtcnn_addsub} leads to
\begin{equation}
	\|\bbx_{\diamond,L} - \hat{\bbx}_{\diamond,L}  \|_2 \leq
	\sum_{g=1}^F \Delta \eps \|\bbx_{\diamond,L-1}^g\|_2 +  \|\bbx_{\diamond,L-1}^g-\hat{\bbx}_{\diamond,L-1}^g\|_2.
	\label{gtcnn_bound_raw}
\end{equation}
This equation indicates a recursion where the bound in the last layer depends on the bound in the previous layer.
We can write the output norm of each layer as
\begin{equation}
	\|\bbx_{\diamond,\ell}^f\|_2 \leq \sum_{g=1}^F \|\bbx_{\diamond,\ell-1}^g \|_2
	\label{gtcnn_bound_layers}
\end{equation}
Substituting~\eqref{gtcnn_bound_raw} and~\eqref{gtcnn_bound_layers} and solving the recursion bounds as in~[(87)-(88), 13] we can bound the output difference norm as
\begin{equation}
	\|\Phi(\bbx_{\diamond};\bbS_T,\bbS)-\Phi(\bbx_{\diamond};\bbS_T,\hat{\bbS})\|_2 \leq
	L F^{L-1}\Delta\eps \|\bbx_\diamond\|_2.
\end{equation}







%





\ifCLASSOPTIONcaptionsoff
  \newpage
\fi

\end{document}